\documentclass{article}

\usepackage{microtype}
\usepackage{graphicx}
\usepackage{subcaption}
\usepackage{booktabs} 
\usepackage[dvipsnames]{xcolor}
\usepackage{hyperref}

\usepackage[preprint]{icml2026}

\usepackage{amsmath}
\usepackage{amssymb}
\usepackage{mathtools}
\usepackage{amsthm}
\usepackage{float}

\usepackage[capitalize,noabbrev]{cleveref}

\theoremstyle{plain}
\newtheorem{theorem}{   Theorem}[section]

\newtheorem{lemma}[theorem]{Lemma}

\theoremstyle{definition}

\theoremstyle{remark}

\DeclareMathOperator*{\argmax}{argmax}

\newcommand{\JointSLBO}{\texttt{JoBS}}

\DeclarePairedDelimiter\floor{\lfloor}{\rfloor}

\newcommand{\squishlisttwo}{
 \begin{list}{$\bullet$}
  { \setlength{\itemsep}{1pt}
     \setlength{\parsep}{0pt}
    \setlength{\topsep}{0pt}
    \setlength{\partopsep}{0pt}
    \setlength{\leftmargin}{1em}
    \setlength{\labelwidth}{1.5em}
    \setlength{\labelsep}{0.5em} } }

\newcommand{\squishend}{
  \end{list}  }
\usepackage{placeins}
\newcommand{\Bsmall}{B_{\text{small}}}
\usepackage[table]{xcolor}
\definecolor{darkgreen}{rgb}{0.0, 0.5, 0.0}
\usepackage{multirow}
\usepackage{anyfontsize}
\usepackage{bm}
\usepackage{wrapfig}
\usepackage{thm-restate}
\usepackage[textsize=tiny]{todonotes}

\icmltitlerunning{The Chicken and Egg Dilemma: Co-optimizing Data and Model Configurations for LLMs}

\begin{document}

\twocolumn[
  \icmltitle{The Chicken and Egg Dilemma: Co-optimizing Data and Model Configurations for LLMs}



  \icmlsetsymbol{equal}{*}

  \begin{icmlauthorlist}
    \icmlauthor{Zhiliang Chen}{equal,nus}
    \icmlauthor{Alfred Wei Lun Leong}{equal,nus,astar}
    \icmlauthor{Shao Yong Ong}{equal,nus}
    \icmlauthor{Apivich Hemachandra}{nus}
    \icmlauthor{Gregory Kang Ruey Lau}{nus}
    \icmlauthor{Chuan-Sheng Foo}{astar}
    \icmlauthor{Zhengyuan Liu}{astar}
    \icmlauthor{Nancy F. Chen}{astar}
    \icmlauthor{Bryan Kian Hsiang Low}{nus}
  \end{icmlauthorlist}

  \icmlaffiliation{nus}{Department of Computer Science, National University of Singapore, Singapore}
  \icmlaffiliation{astar}{Agency for Science, Technology and Research (A*STAR), Singapore}

  \icmlcorrespondingauthor{Zhiliang Chen}{chenzhiliang@u.nus.edu}

  \icmlkeywords{Machine Learning, ICML}

  \vskip 0.3in
]



\printAffiliationsAndNotice{\icmlEqualContribution}

\begin{abstract}
Co-optimizing data and model configurations for training LLMs presents a classic chicken-and-egg dilemma: The best training data configuration (e.g., data mixture) for a downstream task depends on the chosen model configuration (e.g., model architecture), and vice versa. However, jointly optimizing both data and model configurations is often deemed intractable, and existing methods focus on either data or model optimization without considering their interaction. We introduce \JointSLBO{}, an approach that uses a scaling-law-inspired performance predictor to aid Bayesian optimization (BO) in \textit{jointly} optimizing LLM training data and model configurations efficiently. \JointSLBO{} allocates a portion of the optimization budget to learn an LLM performance predictor that predicts how promising a training configuration is from a small number of training steps. The remaining budget is used to perform BO \textit{entirely} with the predictor, effectively amortizing the cost of running full-training runs. We study \JointSLBO{}'s \textit{average regret} and devise the optimal budget allocation to minimize regret. \JointSLBO{} outperforms existing multi-fidelity BO baselines, as well as data and model optimization approaches across diverse LLM tasks under the same optimization budget.
\end{abstract}

\section{Introduction}
There is great commercial and practical interest in maximizing the performance of an LLM for downstream tasks. To do so, practitioners typically optimize the LLM \textit{training components}, particularly the \textit{training data} and \textit{model architecture}. From the \textit{data} perspective, better training data can be chosen via data optimization \cite{koh2020understandingblackboxpredictionsinfluence,xia2024lessselectinginfluentialdata,data_selection_1} and mixing \cite{chen2025aioliunifiedoptimizationframework,chen2025duet,liu2025regmixdatamixtureregression,xie2023doremioptimizingdatamixtures,xie2025chameleonflexibledatamixingframework}. From the \textit{model} perspective, various model optimization methods \cite{raschka2020modelevaluationmodelselection,bananas,zhang2024autoloraautomaticallytuningmatrix, he2024robustifyingboostingtrainingfreeneural} have been introduced to select the most appropriate model for a given task.

In practice, optimizing training data and model architecture is a highly interdependent process. Finding the best training data configuration (e.g., data mixture) depends on the chosen model configuration (e.g., model fine-tuning architecture), but determining the best model configuration also depends on the chosen data.
This presents a classic \textit{chicken-and-egg dilemma} where the optimal choice of training data depends on the optimal choice of model configuration, and vice versa.
Optimizing data and model \textit{independently} often leads to sub-optimal LLM performance \cite{chen2024towardsautoai}. This is demonstrated in Sec.~\ref{sec:exp} where we naively combined data and model optimization methods.

Unfortunately, jointly optimizing training data and model configurations is often deemed challenging and budget-intensive. Prior scaling law works \cite{hoffmann2022trainingcomputeoptimallargelanguage,kaplan2020scalinglawsneurallanguage,shukor2025scalinglawsoptimaldata,zhang2024scalingmeetsllmfinetuning} attempt to quantify the effects of each training component on downstream performance while prescribing simple guidelines for choosing good training components.
However, they require an exhaustive search over a large number of training configurations, which is infeasible in practice.

\begin{figure*}
\centering
\vspace{-2mm}
\includegraphics[width=\linewidth]{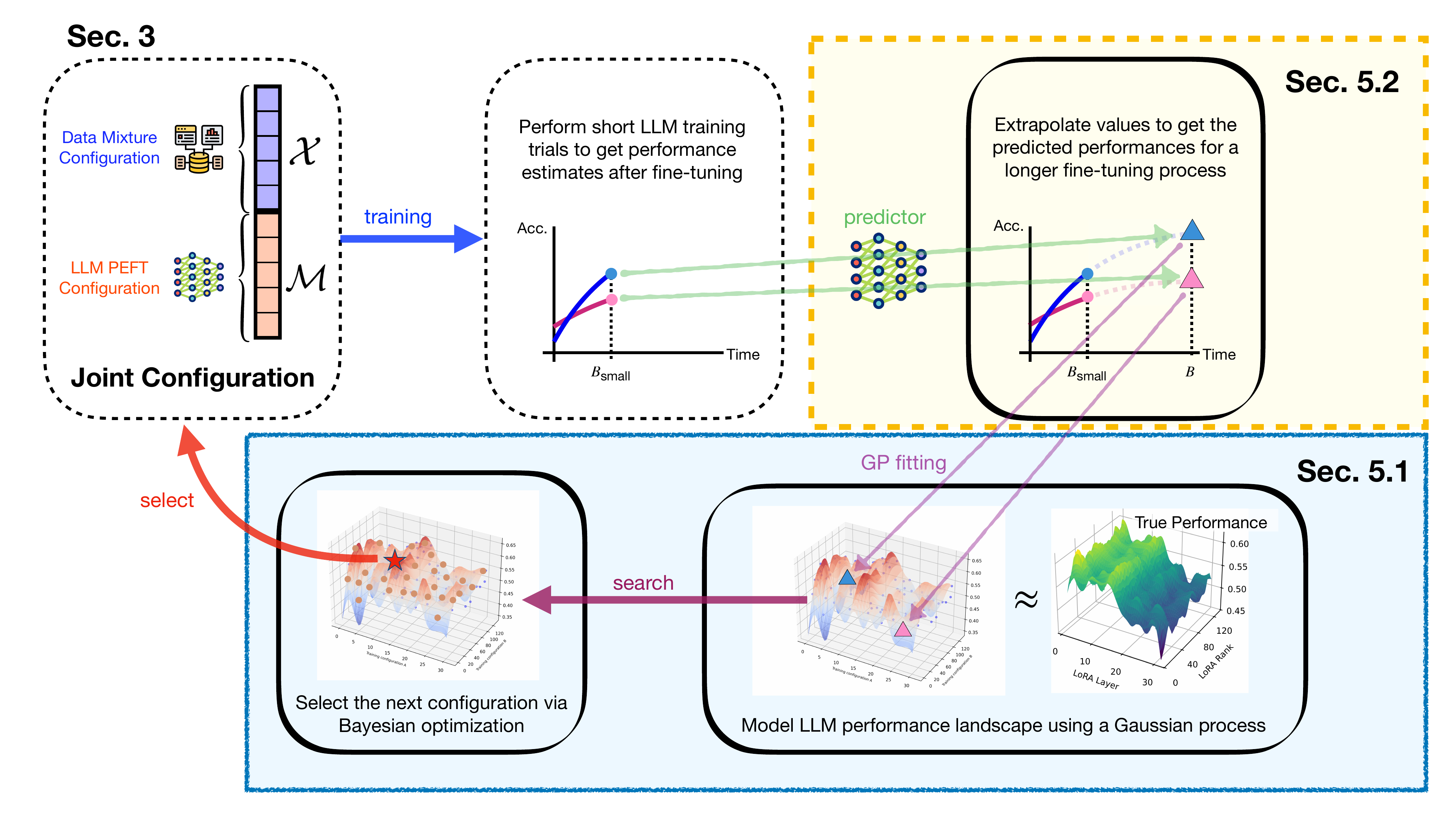}
\caption{\JointSLBO{} optimally balances budget between learning a performance predictor and running Bayesian Optimization iterations.}
\label{fig:overview}
\vspace{-5mm}
\end{figure*}

Our paper frames this chicken-and-egg dilemma as a joint optimization problem w.r.t.~training data and model configurations for LLMs. To solve it efficiently, we present \textbf{\underline{Jo}int \underline{B}ayesian Optimization with a \underline{S}caling-law-inspired predictor} (\JointSLBO{}), an algorithm that uses a novel performance predictor to aid BO in accelerating joint optimization. We show that the predictor does not need to be perfectly accurate; it only needs to provide sufficiently informative signals during the optimization process. We summarize \JointSLBO{} in Fig.~\ref{fig:overview} and present our main contributions:
\squishlisttwo
    \item We propose \JointSLBO{} (Sec.~\ref{sec:approach}), an algorithm that efficiently optimizes LLM training configurations by interleaving BO (Sec.~\ref{sec:BO}) with a performance predictor that predicts how promising a training configuration is, amortizing the cost of performing full-training runs. This enables far more BO iterations within the same optimization budget.
    \item We show that learning an accurate performance predictor requires us to fully train an LLM over a good coverage of initial training configurations (Sec.~\ref{subsec:scaling-predict-performance}). We demonstrate an inherent tradeoff under a fixed budget: performing more full-training runs to train the predictor improves the quality of each BO observation, but reduces the subsequent number of BO acquisition steps available.
    \item We theoretically analyze how the performance predictor’s error propagates into \JointSLBO{}'s average regret. From this, we find the optimal budget allocation that one should use to train the predictor and minimize regret (Sec.~\ref{subsec:convergence-theorem}).
    \item We show that \JointSLBO{} attains performance improvement over existing data optimization, model optimization and multi-fidelity BO baselines on a variety of LLM tasks (Sec.~\ref{sec:exp}). We also run ablations to demonstrate how the performance of \JointSLBO{} varies w.r.t.~the accuracy of our performance predictor, affirming our theoretical findings.
\squishend
\vspace{-2mm}
\section{Related Work} \label{sec:related-work}

\textbf{Multi-fidelity BO.} BO has been widely adopted for optimizing black-box functions where evaluations are costly \cite{bo-gp-ucb-10}. In particular, multi-fidelity BO \cite{wu2019practicalmultifidelitybayesianoptimization, yen2025datamixtureoptimizationmultifidelity, swersky2014freezethawbayesianoptimization,lee2025costsensitivefreezethawbayesianoptimization} models the relationship between low-fidelity and high-fidelity observations to strategically select which fidelity to perform trials in. In our setting, the LLM performance at a small training step can naturally be viewed as a low-fidelity observation. \JointSLBO{} differs from multi-fidelity BO by first using an optimal portion of the optimization budget to learn a performance predictor, before performing BO \textit{entirely} using this neural network predictor \textit{without revisiting any high-fidelity evaluations}. In App.~\ref{app:relation-to-mf-bo}, we provide a detailed analysis of the difference between our work and existing multi-fidelity BO works, such as freeze-thaw methods \cite{swersky2014freezethawbayesianoptimization} and early-stopping \cite{bos-zhongxiang}.

\textbf{Data and model optimization.} Conventional data \cite{koh2020understandingblackboxpredictionsinfluence,xia2024lessselectinginfluentialdata, chen2025duet, wang2024diversity-log-det} and model \cite{darts, he2024robustifyingboostingtrainingfreeneural, zhang2024autoloraautomaticallytuningmatrix} optimization works have only optimized either the data or model component \textit{independently}. To our knowledge, \JointSLBO{} is the first algorithm that \textit{jointly} optimizes LLM data and model components.

\textbf{LLM Scaling Law.} Existing scaling laws \cite{kaplan2020scalinglawsneurallanguage, hoffmann2022trainingcomputeoptimallargelanguage,zhang2024scalingmeetsllmfinetuning,shukor2025scalinglawsoptimaldata, scaling_law_performance_2,scaling_law_performance} establish symbolic formulas that extrapolate LLM performance from a small number of training steps. Unfortunately, as shown in Sec.~\ref{subsec:scaling-predict-performance}, these formulas do not work well in our problem setting because they are defined w.r.t.~a fixed training configuration (e.g., the same training data pool), while we need to predict LLM performance for a wide range of different LLM training configurations. Notably, \citet{yen2025datamixtureoptimizationmultifidelity} explored data mixtures optimization by using neural networks to predict LLM performance, but did not consider the cost of training the predictor as part of their optimization budget.

\section{Problem Setup}
\label{sec:prelim}
We consider two types of training components: \textbf{training data} $\mathcal{X}$ and \textbf{model} $\mathcal{M}$. Given these training components, we define a training process $P_B$ that fine-tunes an LLM for $B$ training steps to produce trained LLM weights $\theta_{\mathcal{X}, \mathcal{M}, B} \triangleq P_B(\mathcal{X}, \mathcal{M})$, which can be evaluated over a predefined performance metric $\mathcal{L}$. Our formulation is flexible enough to accommodate different, possibly black-box metrics (e.g., question-answering accuracy, evaluation loss). This is similar to the setting introduced by \citet{chen2025duet} where we do not have any fine-grained knowledge of data involved in the evaluation task.
Our goal is to find training component configurations $\mathcal{X}, \mathcal{M}$ that maximize the LLM performance metric:
\begin{equation}
\max_{\mathcal{X}, \mathcal{M}} \mathcal{L}(\theta_{\mathcal{X}, \mathcal{M}, B})\ . \label{eq:joint-opt}
\vspace{-2mm}
\end{equation}


\textbf{Data $\mathcal{X}$.} Assume we have $N$ training datasets $\mathcal{D} \triangleq D_1 \bigcup D_2 \bigcup \dots \bigcup D_N$ from $N$ different domains (e.g., Wikipedia, TruthfulQA \cite{truthfulQA} for language
tasks). The training data component consists of a subset of data $\mathcal{X} \subseteq \mathcal{D}$. In general, the selection of $\mathcal{X}$ ensures the selected data points are more relevant to the given task \cite{chen2025duet} or of higher quality \cite{wang2024helpfulharmfuldatafinetuningfree,xia2024lessselectinginfluentialdata, zhang2025_diversity_data_selection}. In our work, we overload the notation $\mathcal{X}$ to represent a data mixture's mixing ratio \cite{chen2025duet}, represented by a probability simplex of dimension $N-1$ ($\mathcal{X} \in \Delta^{N-1} \subset \mathbb{R}^N$). This implies that optimizing the data mixture requires us to find the optimal mixing ratio across the $N$ data domains.


\textbf{Model $\mathcal{M}$.} Our formulation is flexible enough to incorporate any model configuration that needs to be optimized. To simplify our formulation, our work focuses on parameter-efficient fine-tuning (PEFT) of LLMs with LoRA \citep{hu2021lora}, but we also extend our work to the full-parameter fine-tuning setting in App.~\ref{app:more-exp-results}. The optimization problem takes as inputs: (1) the LLM \textit{modules} where PEFT is applied (e.g., $Q,V$-projection \cite{attention}), (2) the \textit{layer(s)} where LoRA is applied (e.g., layer 30), and (3) the \textit{PEFT hyperparameters}, including LoRA rank, $\alpha$, and dropout \cite{hu2021lora}. 
These inputs can be concatenated into an $M$-dimensional vector $\mathcal{M} \in \left(\mathbb{Z}^+\right)^M \subset \mathbb{R}^M$.

We also consider a total optimization budget $C \geq B$, which prior work \cite{yen2025datamixtureoptimizationmultifidelity} does not. A practitioner needs to optimize the training configuration within this budget. For instance, if our optimization budget is $C=50000$ training steps and each full-training run consumes $B=1000$ training steps, then we can only evaluate $C/B=50$ different training configurations. \JointSLBO{} amortizes the cost of each training run to $\Bsmall < B$ with a performance predictor, allowing it to run many more BO iterations. Our work also theoretically analyzes the tradeoff between the cost of training the predictor and BO iterations in Sec.~\ref{subsec:convergence-theorem}.
\vspace{-2mm}
\section{Preliminary Findings}\label{sec:prelim-findings}



The objective function of Problem~\ref{eq:joint-opt} that describes the relationship between selected training components $\mathcal{X}, \mathcal{M}$ and fine-tuned LLM performance $\mathcal{L}$ has no closed-form expression. As mentioned, \JointSLBO{} uses BO and a performance predictor to solve the problem. To motivate these design choices, we present several empirical findings that give us a clearer understanding of the optimization problem at hand.

\subsection{Performance Landscape}

\begin{figure}[ht]
\vspace{-6mm}
  \centering
    \begin{subfigure}{0.5\columnwidth}
    \centering
    \includegraphics[width=\linewidth]{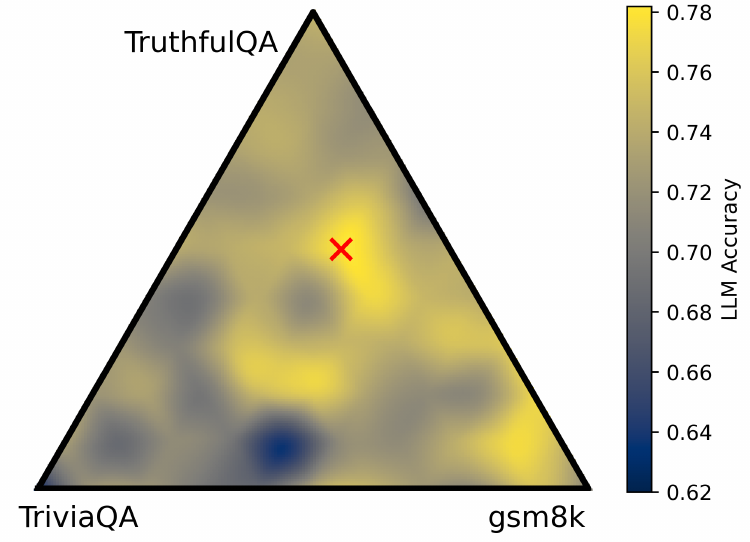}
    \caption{LLM performance varies with data mixtures.}
    \label{fig:data-mix}
  \end{subfigure}
  \hfill
  \begin{subfigure}{0.46\columnwidth}
    \centering
    \includegraphics[width=\linewidth]{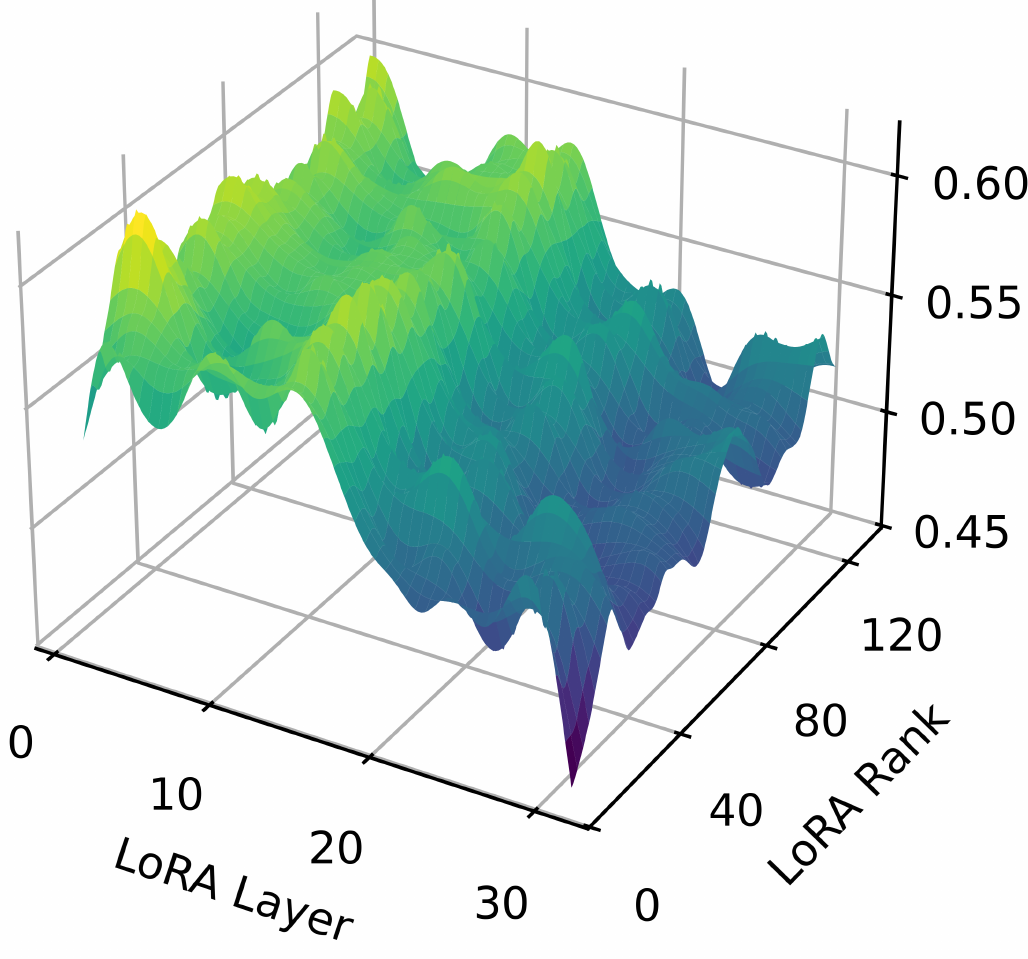}
    \caption{LLM performance varies with LoRA configurations.}
    \label{fig:lora}
  \end{subfigure}
    \caption{LLM performance varies with different configurations.}
  \label{fig:prelim:model}
  \vspace{-2mm}
\end{figure}

We first investigated the performance landscape of our optimization problem. In Fig.~\ref{fig:prelim:model}, we fine-tuned a \texttt{Llama-3-8B-Instruct} \citep{grattafiori2024llama} model with varying LoRA configurations and data mixtures and plotted its performance on the GSM$8$K \cite{gsm8k} task (averaged over 5 trials). The landscape shows that certain LoRA layer, rank, and data mixture yield better performance (more than $10\%$) than an arbitrarily chosen configuration. While optimizing LLM training configurations leads to significant performance improvements, the optimal training configuration is difficult to find via heuristics \cite{radford2019language, gao2020pile800gbdatasetdiverse}. In addition, the performance landscape varies rather smoothly, making BO a suitable black-box optimization approach, since we can effectively model the landscape with a smooth surrogate function \cite{frazier2018tutorialbayesianoptimization}.



\subsection{Amortization by Predicting Performance}\label{subsec:scaling-predict-performance}
\FloatBarrier
\begin{figure}[t]
\centering
\includegraphics[width=\linewidth]{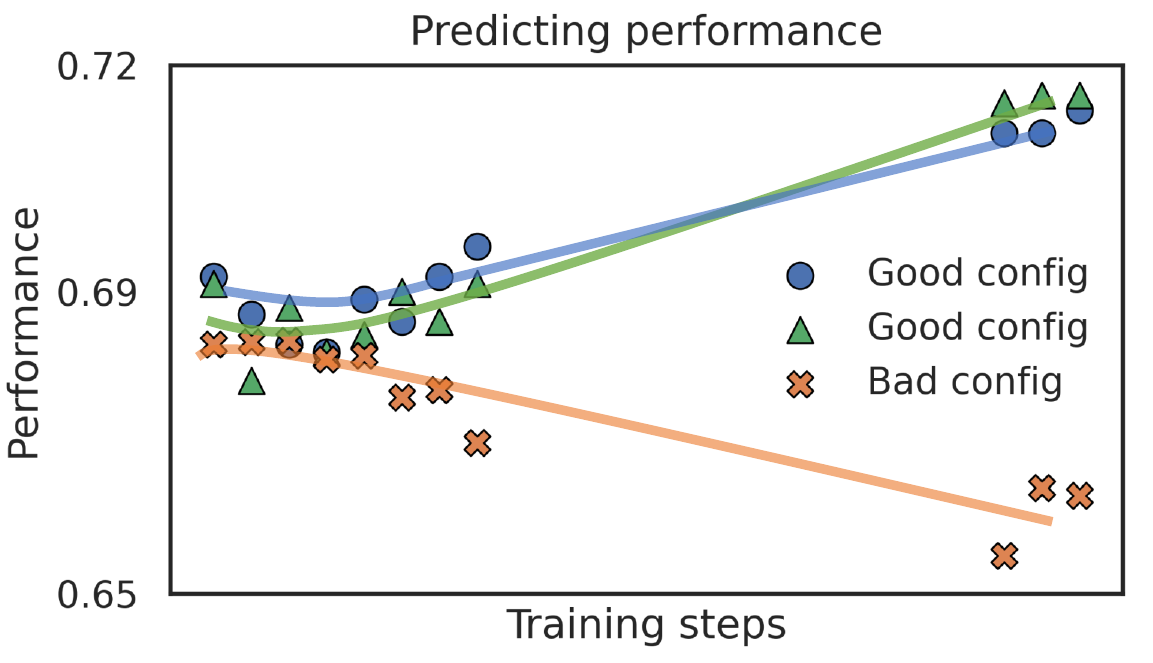}
\vspace{-5mm}
\caption{A neural network can be trained to predict LLM performance from small training steps. }
\label{fig:scaling-law}
\vspace{-5mm}
\end{figure}

We then investigated how well a small number of training steps can predict LLM performance at larger training steps in the fine-tuning setting. Fig.~\ref{fig:scaling-law} shows that the change in LLM evaluation performance varies widely depending on the chosen training configuration. For instance, if a good training configuration is chosen (\textcolor{ForestGreen}{green}), performance improves with more training steps. In contrast, training with a suboptimal training configuration (\textcolor{orange}{orange}) causes performance to \textit{decrease} as training steps increase. This is unsurprising as we expect certain data mixtures to degrade LLM performance for a mismatched task. In our setting, we need to predict LLM performance for different training configurations that we do not know beforehand. While existing scaling law formulas \cite{zhang2024scalingmeetsllmfinetuning,scaling_law_performance} attempt to predict LLM performance, we cannot reuse them since they are defined for a \textit{fixed} training configuration. Therefore, \JointSLBO{} uses a more expressive and generalizable neural network (Sec.~\ref{sec:scaling-law}) to predict LLM performance from a small number of training steps across different training configurations. While this network does not admit an interpretable or symbolic representation like scaling laws, its primary role in our work is to provide useful signals and accelerate the optimization process.

\vspace{-2mm}

\section{Introducing \JointSLBO{}}\label{sec:approach}

\JointSLBO{} features two main components: (i) We use a surrogate Gaussian process \cite{gp-for-ml} to model the empirically smooth performance function landscape $\mathcal{L}$ (shown in Fig.~\ref{fig:prelim:model}), allowing BO to search for the optimal training configuration in a sample-efficient manner (Sec.~\ref{sec:BO}). (ii) Before optimization, we allocate a portion of the optimization budget to learn a performance predictor that estimates the trained LLM performance from a small number of training steps (Sec.~\ref{sec:scaling-law}), amortizing the cost of full-training runs and increasing the BO iterations available.


\textit{A key question is how much optimization budget we should allocate to learn our predictor.} Suppose we have an optimization budget of $C=50000$ training steps and each full-training run consumes a budget of $B=1000$ training steps. Running BO without a predictor admits only $50$ BO iterations. However, if we use 60\% of the optimization budget to collect full-training runs and train our performance predictor to predict the full-training performance from $\Bsmall=100$ training steps, we can run four times as many BO iterations: $(C-30000)/\Bsmall=200$.

A clear tradeoff exists: allocating too much budget to train the predictor leaves fewer BO acquisition steps available to effectively search for the optimal training configuration, while collecting too few full-training runs leads to an inaccurate predictor that does not faithfully recover the true performance landscape and degrades the optimization performance. We analyze the performance tradeoff between the number of initial full-training runs and remaining BO iterations theoretically (Sec.~\ref{subsec:convergence-theorem}) and empirically (Sec.~\ref{subsec:ablation}), allowing us to mathematically derive the optimal budget allocation. Notably, the performance prediction does not need to be perfect; the BO framework gracefully treats any prediction noise as \textit{observation noise}, allowing \JointSLBO{} to converge to the optimal training configuration (Theorem~\ref{thm:convergence-main}).


\subsection{BO as the Backbone of \JointSLBO{}}\label{sec:BO}

We consider LLM performance as a function $\mathcal{L} : \mathbb{R}^d \mapsto \mathbb{R}$ over the space of inputs $x = [\mathcal{X}, \mathcal{M}] \in \mathbb{R}^d$ where $d = N+M$ (see Sec.~\ref{sec:prelim}). We treat our objective function in Problem \ref{eq:joint-opt} as a \textit{black-box function} whose maximum $x^* \triangleq \argmax_{x} \mathcal{L}(x)$ we want to recover. 
In line with existing work, we model $\mathcal{L}$ as a surrogate \emph{Gaussian process} (GP) \cite{gp-for-ml}.
In each iteration $t=1,2,\dots,T$, we use an LLM training configuration $x_t$ to obtain a \textit{noisy} realization of the LLM performance (after training with $x_t$) $y_t \triangleq \mathcal{L}(x_t) + \epsilon_t$, which we assume is corrupted with sub-Gaussian noise $\epsilon_t$ (e.g., normally-distributed noise) to form the sample $(x_t,y_t)$. Consistent with the work of~\citet{bo-kernelized-bandits}, our
GP is specified by its \emph{prior} mean $\mu(x)$ and covariance $\kappa(x,x')$ for all $x,x' \in \mathbb{R}^d$, where $\kappa$ is a \textit{kernel} function that characterizes the correlation between two inputs $x$ and $x'$.

Given the noisy observations $\bm{y}_t \triangleq [y_{\tau}]^{\top}_{\tau=1,\dots,t}$ at inputs $x_1,\dots,x_t$, the posterior belief of $\mathcal{L}$ at any new input $x'$ is a Gaussian distribution with the \emph{posterior} mean and variance given by
{\small
\begin{equation}
\label{gp:posterior}
\begin{split}
     & \mu_t(x') \triangleq \kappa_t^{\top}(x')(K_t + \zeta I)^{-1}\bm{y}_t \\
     & \sigma_t(x') \triangleq \kappa(x',x')-\kappa_t^{\top}(x')(K_t + \zeta I)^{-1}\kappa_t(x')
\end{split}
\end{equation}
}
where $\kappa_t(x') \triangleq [\kappa(x', x_{\tau})]^{\top}_{\tau=1,\dots,t}$ is a column vector, $K_t \triangleq [\kappa(x_\tau,x_{\tau'})]_{\tau,\tau' \in 1,\ldots,t}$ is a $t \times t$ covariance matrix, and $\zeta > 0$ is viewed as a free hyperparameter \cite{bo-kernelized-bandits}. By learning the correlation between inputs and observations, modeling $\mathcal{L}$ with a GP allows us to model the relationship between training configurations and LLM performance.

\textbf{Using BO for our joint optimization problem.} 
To determine the best configuration $x^*$, we strategically choose an LLM training configuration to evaluate at each iteration to determine its performance and continually update the GP in~\eqref{gp:posterior} to better estimate $\mathcal{L}$.
In round $t$, the BO algorithm proposes the next configuration $x_{t+1}$ 
that maximizes an acquisition function, e.g., the
\emph{upper confidence bound} (UCB) \cite{bo-gp-ucb-10}. More details on BO can be found in App.~\ref{app:BO}.
We assess the convergence of a BO algorithm by analyzing its average regret after $T$ BO iterations, given by $\mathcal{R}_T \triangleq T^{-1}\sum_{t=1}^T (\mathcal{L}(x^*)-\mathcal{L}(x_t))$ \cite{tay2023bayesian_cost} where $\mathcal{L}(x^*)$ is the optimum. We provide a theoretical analysis of \JointSLBO{}'s average regret in Sec.~\ref{subsec:convergence-theorem}.


\subsection{Amortization with Performance Predictor}\label{sec:scaling-law}


We found that BO alone can already achieve a considerable improvement in LLM performance over other data and model optimization baselines (Sec.~\ref{subsec:results-compare-data-model-opt}). However, directly applying BO requires us to fully train an LLM at each iteration. To amortize the cost of running these expensive trials, we learn a performance predictor to estimate the full fine-tuned LLM performance from a small number of training steps. This allows us to run significantly more BO iterations and enables us to find better-performing training configurations. Contrary to previous work \cite{amortize_BO_1, yen2025datamixtureoptimizationmultifidelity}, we additionally provide a detailed analysis of the computational cost tradeoff from training this predictor.

For our predictor to work, we need to predict LLM performance for different LLM training configurations (which we do not know in advance) at each BO iteration. To do so, \JointSLBO{} learns a neural network, which takes \textit{any} LLM training configuration $[\mathcal{X}, \mathcal{M}]$ and its performance ${\mathcal{L}}(\theta_{\mathcal{X}, \mathcal{M}, \Bsmall})$ at timestep $\Bsmall < B$ as inputs and predicts the final fine-tuned LLM performance. In our work, we use the LLM performance at a few earlier, sequential training steps to make more accurate predictions. For example, in our experiments we use the observed LLM performances before 100 training steps to predict the final LLM performance at 1000 training steps. Some examples of the predictor's predictions are shown in Fig.~\ref{fig:scaling-law}. In our ablations, we found that a larger $\Bsmall$ also allows the predictor to make more accurate predictions at the expense of consuming more optimization budget, which is consistent with Theorem~\ref{thm:convergence-main}.

\JointSLBO{} learns the predictor in two stages.
\textit{First}, it collects a random Sobol sequence~\cite{sobol} of $N$  LLM training configurations over $\mathcal{X}$ and $\mathcal{M}$. This ensures the predictor is trained with a diverse set of data. 
For each configuration, the LLM is trained to completion, and we observe LLM performance at a small number of training steps 
${\mathcal{L}}(\theta_{\mathcal{X}, \mathcal{M}, \Bsmall})$ and after a full-training run
${\mathcal{L}}(\theta_{\mathcal{X}, \mathcal{M}, B})$.
These observations are also used to fit the GP surrogate that approximates the performance landscape (Sec.~\ref{sec:BO}). \textit{Second}, using this initial set of full-training observations, \JointSLBO{} fits a neural network
$\mathcal{F} : (\mathcal{X}, \mathcal{M}, {\mathcal{L}}(\theta_{\mathcal{X}, \mathcal{M}, \Bsmall})) \mapsto {\mathcal{L}}(\theta_{\mathcal{X}, \mathcal{M}, B})$,
which predicts the final LLM performance from a short training run.
After learning the predictor, \JointSLBO{} proceeds with BO normally (Sec.~\ref{sec:BO}) but at every iteration, it only trains the LLM for $\Bsmall$ steps and uses $\mathcal{F}$ to predict the corresponding full-training performance
${\mathcal{L}}(\theta_{\mathcal{X}, \mathcal{M}, B})$. The predicted full-training performance is then used to update the GP model for that iteration.

\subsection{Prediction Error and Performance Tradeoffs}\label{subsec:performance-tradeof}

\begin{figure}[h]
\centering
\includegraphics[width=\linewidth]{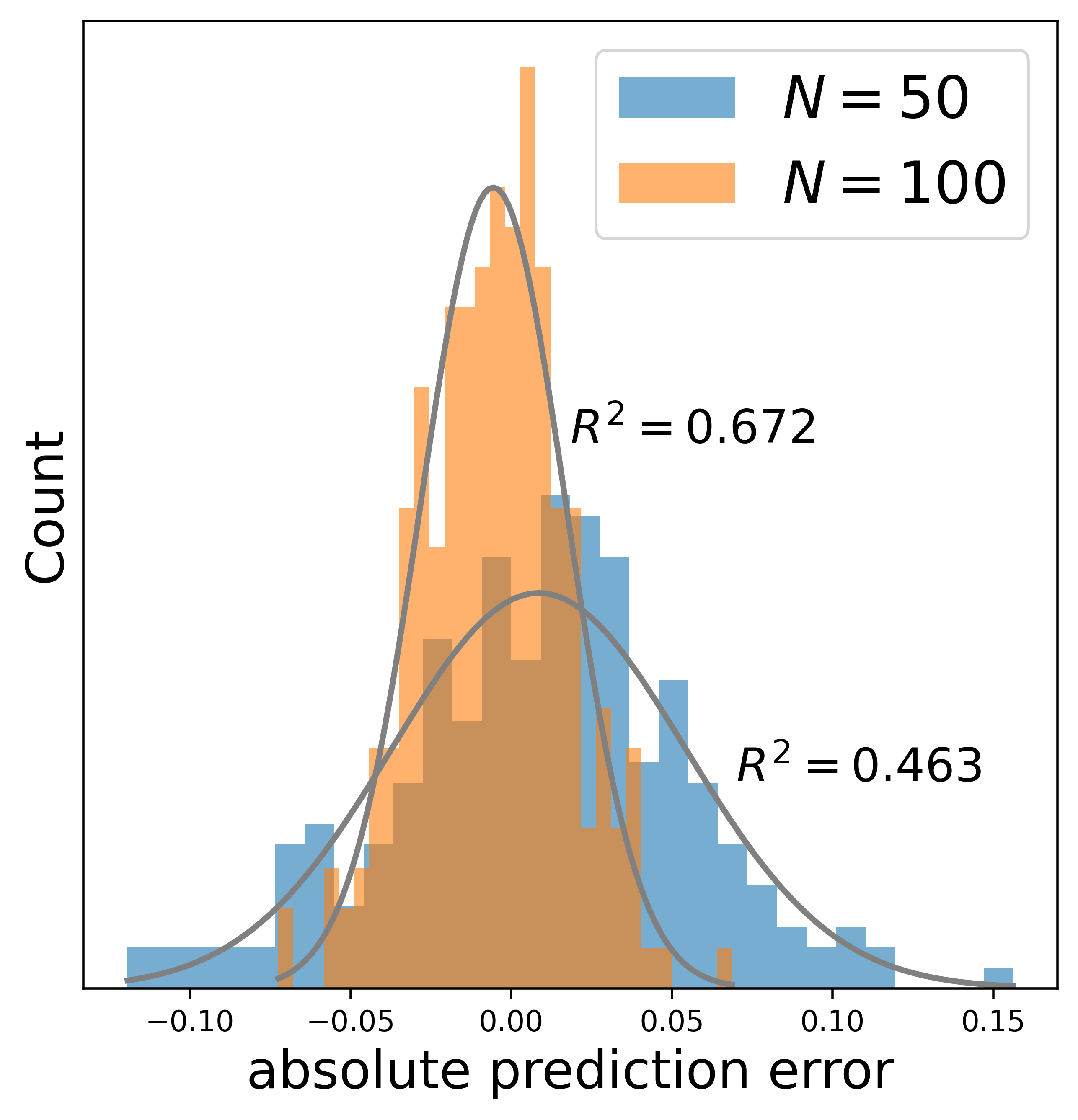}
\caption{Prediction error of the predictor in \JointSLBO{} w.r.t.~different number of initial full-training runs $N$}
\label{fig:prediction-error-f-samples-N}
\vspace{-5mm}
\end{figure}



\textbf{Prediction error.}
Fig.~\ref{fig:prediction-error-f-samples-N} illustrates how the prediction error of $\mathcal{F}$ varies with the number $N$ of initial full-training runs.
Increasing $N$ improves the predictor $\mathcal{F}$'s accuracy, as shown by the higher $R^2$ scores and tighter error concentration around zero.
Empirically, we observe that the prediction error is well approximated by a normal distribution, whose variance decreases as more full-training observations are collected (i.e., increasing $N$). It is worth mentioning that despite prediction errors, \JointSLBO{} is still shown to converge theoretically (Theorem~\ref{thm:convergence-main}) and empirically (Fig.~\ref{fig:BO-convergence}).

\textbf{Tradeoffs.} It is not practical to use an excessively large $N$ to reduce these errors.
Under a given optimization budget, increasing the number of full-training observations $N$ reduces the number of BO iterations that \JointSLBO{} can perform, and vice versa.
This tradeoff can be characterized analytically.
Collecting $N$ full-training runs, each requiring $B$ training steps, consumes a budget of $N\times B$ initially.
After the predictor is learned, each BO iteration incurs only $\Bsmall$ training steps. Hence, 
given an optimization budget $C$, the remaining budget $C - NB$ allows for
$\lfloor (C - NB) / \Bsmall \rfloor$ BO iterations (i.e., acquisition steps).
In the next section, we formally analyze how the choice of $N$ influences the convergence of \JointSLBO{}.

\subsection{Theoretical Analysis of \JointSLBO{}}\label{subsec:convergence-theorem}

We analyze how the regret bound of \JointSLBO{} varies w.r.t.~the predictor error and the choice of $N$ in \JointSLBO{}. Although we cannot precisely quantify the prediction error, Fig.~\ref{fig:prediction-error-f-samples-N} suggests that it is $R$-sub-Gaussian \cite{bo-kernelized-bandits} whose variance increases with smaller $N$. The following assumption describes this characteristic.

\textbf{Assumption.} Let $\epsilon$ be the prediction error of the performance predictor, learned from $N$ full-training runs over random training configurations, and its empirical distribution is shown in Fig.~\ref{fig:prediction-error-f-samples-N}. We assume $\epsilon$ is $R$-sub-Gaussian with $R=\frac{k}{\sqrt{N}}$ for some constant $k$, suggesting that the prediction error variance decreases with a larger $N$.

Using this assumption, the following theorem captures \JointSLBO{}'s average regret given an optimization budget $C$ and a choice on the number of initial full-training runs $N$.


\begin{restatable}{theorem}{convergence}
\label{thm:convergence-main}
Let ${\mathcal{L}}(\theta_{\mathcal{X}, \mathcal{M}, B})$ be the performance landscape of the LLM training configuration with bounded RKHS norm: $\left \lVert \mathcal{L} \right \rVert _{\kappa} = \sqrt{	\langle \mathcal{L},\mathcal{L} \rangle_\kappa} \leq \mathcal{B}$
w.r.t.~kernel $\kappa$. Assume our performance predictor is learnt from $N$ full-training runs and makes prediction $\mathcal{F}({\mathcal{L}}(\theta_{\mathcal{X}, \mathcal{M}, \Bsmall})) = {\mathcal{L}}(\theta_{\mathcal{X}, \mathcal{M}, B}) + \epsilon$ from $\Bsmall < B$ training steps and satisfies the assumption above. Then, running \JointSLBO{} over LLM training configurations $\mathcal{X}, \mathcal{M}$ until an optimization budget of $C$ is exhausted yields $T = \floor*{\frac{C-NB}{\Bsmall}}$ BO iterations, and with the IGP-UCB acquisition function \cite{bo-kernelized-bandits}, \JointSLBO{} yields the following average regret with probability at least $1-\delta$:

{\small
\begin{equation}
\label{eq:avg-regret}
\begin{aligned}
\mathcal{R}_T
&= \mathcal{O}\Bigg(
\sqrt{\frac{\Bsmall}{C - NB - \Bsmall}} \\
&\quad 
\Big(
\mathcal{B}\sqrt{\gamma_T}
+ \frac{k}{\sqrt{N}}
\sqrt{\gamma_T^2 + \gamma_T \ln(1/\delta)}
\Big)
\Bigg),
\end{aligned}
\end{equation}
}

where $\gamma_{T}$ is the maximum information gain of $\mathcal{L}$ after $T = \floor*{\frac{C-NB}{\Bsmall}}$ BO iterations.
\end{restatable}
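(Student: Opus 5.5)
The proof is a reduction to the IGP-UCB regret analysis of \citet{bo-kernelized-bandits}. The key observation is that, after the predictor is learned, every BO observation has the form $y_t = \mathcal{F}(\mathcal{L}(\theta_{\mathcal{X}_t,\mathcal{M}_t,\Bsmall})) = \mathcal{L}(x_t) + \epsilon_t$. Under the Assumption, each $\epsilon_t$ is $R$-sub-Gaussian with $R = k/\sqrt{N}$. So \JointSLBO{}'s BO phase is exactly IGP-UCB run on $\mathcal{L}$, with $\lVert \mathcal{L}\rVert_\kappa \le \mathcal{B}$, under sub-Gaussian observation noise of scale $R$. One caveat: IGP-UCB needs $\epsilon_t$ to be conditionally $R$-sub-Gaussian given the history (a martingale-difference condition). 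The prediction error depends on the evaluated configuration $x_t$, which is chosen adaptively. I will therefore read the Assumption as holding conditionally on $x_t$ and the past, i.e.\ $\mathcal{F}$ is fixed after the $N$ initial runs and fresh randomness in the short run produces the noise. This is the step I expect to need the most care, and I will state it explicitly as the interpretation of the Assumption.

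\textbf{Step 1 (budget accounting).} The $N$ full-training runs consume $NB$ steps, and each subsequent iteration costs $\Bsmall$, so $T = \lfloor (C-NB)/\Bsmall\rfloor$. I will also use the lower bound $T \ge (C-NB)/\Bsmall - 1 = (C-NB-\Bsmall)/\Bsmall$, which gives $1/\sqrt{T} \le \sqrt{\Bsmall/(C-NB-\Bsmall)}$. The initial $N$ points can be used to warm-start the GP. This only shrinks posterior variances, so it does not hurt the bound, and I will simply ignore it in the regret count.

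\textbf{Step 2 (invoke IGP-UCB).} By Theorem 3 of \citet{bo-kernelized-bandits}, with $\beta_t = \mathcal{B} + R\sqrt{2(\gamma_{t-1} + 1 + \ln(1/\delta))}$, the following holds with probability at least $1-\delta$:
\[
\sum_{t=1}^T r_t = \mathcal{O}\Bigl(\mathcal{B}\sqrt{T\gamma_T} + R\sqrt{T(\gamma_T^2 + \gamma_T\ln(1/\delta))}\Bigr).
\]
This follows from the standard chain: the confidence bound $|\mathcal{L}(x) - \mu_{t-1}(x)| \le \beta_t\sigma_{t-1}(x)$, then $r_t \le 2\beta_t\sigma_{t-1}(x_t)$, then Cauchy--Schwarz, and finally $\sum_t \sigma_{t-1}^2(x_t) = \mathcal{O}(\gamma_T)$.

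\textbf{Step 3 (conclude).} Substitute $R = k/\sqrt{N}$, divide by $T$ to get the average regret $\mathcal{R}_T$, and apply the bound on $1/\sqrt{T}$ from Step 1. This yields \eqref{eq:avg-regret}.
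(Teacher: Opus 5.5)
Your proposal is correct and follows the paper's proof essentially step for step: treat the predictor error as $R$-sub-Gaussian observation noise with $R=k/\sqrt{N}$, invoke the IGP-UCB bound of \citet{bo-kernelized-bandits} to get cumulative regret $\mathcal{O}\bigl(\mathcal{B}\sqrt{T\gamma_T}+R\sqrt{T(\gamma_T^2+\gamma_T\ln(1/\delta))}\bigr)$, divide by $T$, and use $\lfloor (C-NB)/\Bsmall\rfloor \ge (C-NB-\Bsmall)/\Bsmall$. Your explicit reading of the Assumption as conditional, zero-mean sub-Gaussianity given the history is exactly what the paper relies on implicitly, and the paper also glosses over your claim that warm-starting with the $N$ full runs does not hurt the bound (strictly, the confidence width would then carry $\gamma_{N+t-1}$ rather than $\gamma_{t-1}$).
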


The proof is provided in App.~\ref{proof-BO-convergence} and considers how the choice of $N$ affects the prediction error and the number of BO iterations available for optimization. We also show in App.~\ref{app:comparison-vanilla-BO} that this bound is smaller than the average regret of vanilla BO \cite{bo-kernelized-bandits} under some reasonable assumptions.

\textbf{Optimal budget allocation}. Theorem \ref{thm:convergence-main} presents a compute-performance tradeoff: collecting more ($N$) initial full-training runs allows \JointSLBO{} to make a more accurate performance prediction at each BO iteration. This \textit{decreases} the $k/\sqrt{N}$ term in the average regret given by Eq.~\ref{eq:avg-regret}. However, a larger $N$ also reduces the number of BO iterations available and \textit{increases} the left term in Eq.~\ref{eq:avg-regret}. Using $\Bsmall=100$, $C=50000$, and  $B=1000$ (the values adopted in our experiments), \textbf{the optimal $N$} (used to train our predictor) \textbf{that minimizes the average regret $\mathcal{R}_T$ lies approximately between $25$ and $35$} (depending on other constants). In our ablation studies (Sec.~\ref{subsec:ablation}), we empirically illustrate this tradeoff and provide practical guidelines for choosing $N$.


\vspace{-2mm}
\section{Experiments}\label{sec:exp}
We use \JointSLBO{} to jointly optimize training configurations for LLMs in a variety of language tasks and LLM model types. Our experiments are divided into three parts. First, we compare \JointSLBO{} against independent data and model optimization methods. Next, we compare \JointSLBO{}'s convergence with various multi-fidelity BO baselines. Lastly, we perform ablations to tease apart several components in \JointSLBO{}, illustrating the tradeoff between the choice of $N$ and the number of BO iterations available for optimization.

In our main results, we used LoRA \cite{hu2021lora} to fine-tune the \texttt{Llama-3-8B-Instruct} LLM. We also extend our experiments to different model families (\texttt{Qwen}), larger models (14B, 32B), and full-parameter fine-tuning in App.~\ref{app:more-exp-results}, where the results remain consistent with our key findings. Our goal is to maximize the LLM's evaluation task performance after $B=1000$ training steps, given an overall optimization budget of $C=50000$ training steps. This evaluation is done using \texttt{lm-evaluation-harness} \cite{eval-harness} and evaluates the LLM performance across six language tasks (See Fig.~\ref{fig:BO-convergence}). We used $N=30$ initial full-training runs to train our performance predictor to make predictions from $\Bsmall=100$ training steps. This allows us to run $T=\frac{C-NB}{\Bsmall}=200$ BO iterations of cheaper trials in \JointSLBO{}. We also used the UCB acquisition function \cite{bo-gp-ucb-10} in \JointSLBO{}. We perform ablations in Sec.~\ref{subsec:ablation} to investigate and justify our choice of $N$ and $\Bsmall$. More details on the experimental setup, including batch size, learning rate, and BO parameters can be found in App.~\ref{app:exp-details}.
 
\textbf{Data configuration.} There are 9 data domains in our training data mixture: \textbf{Wikitext} \cite{wikitext-data}, \textbf{GSM8K} \cite{gsm8k}, \textbf{PubmedQA} \cite{pubmedqa}, \textbf{SciQ} \cite{sciq}, \textbf{TriviaQA} \cite{triviaQA}, \textbf{TruthfulQA} \cite{truthfulQA}, \textbf{MMLU} \cite{hendrycks2021measuringmassivemultitasklanguage},
\textbf{AI2 ARC} \cite{ai2_arc} and \textbf{CommonsenseQA} \cite{talmor2019commonsenseqaquestionansweringchallenge}. We construct a fine-tuning dataset consisting of $10000$ data points by randomly sampling data points from the training datasets \cite{chen2025duet,xie2023doremioptimizingdatamixtures,ye2024datamixinglawsoptimizing}. As mentioned earlier, we define the data configuration $\mathcal{X} \in \mathbb{R}^9$ as the mixing ratio (a probability simplex) across these data domains.

\begin{figure*}[t]
\centering
\includegraphics[width=0.9\linewidth]{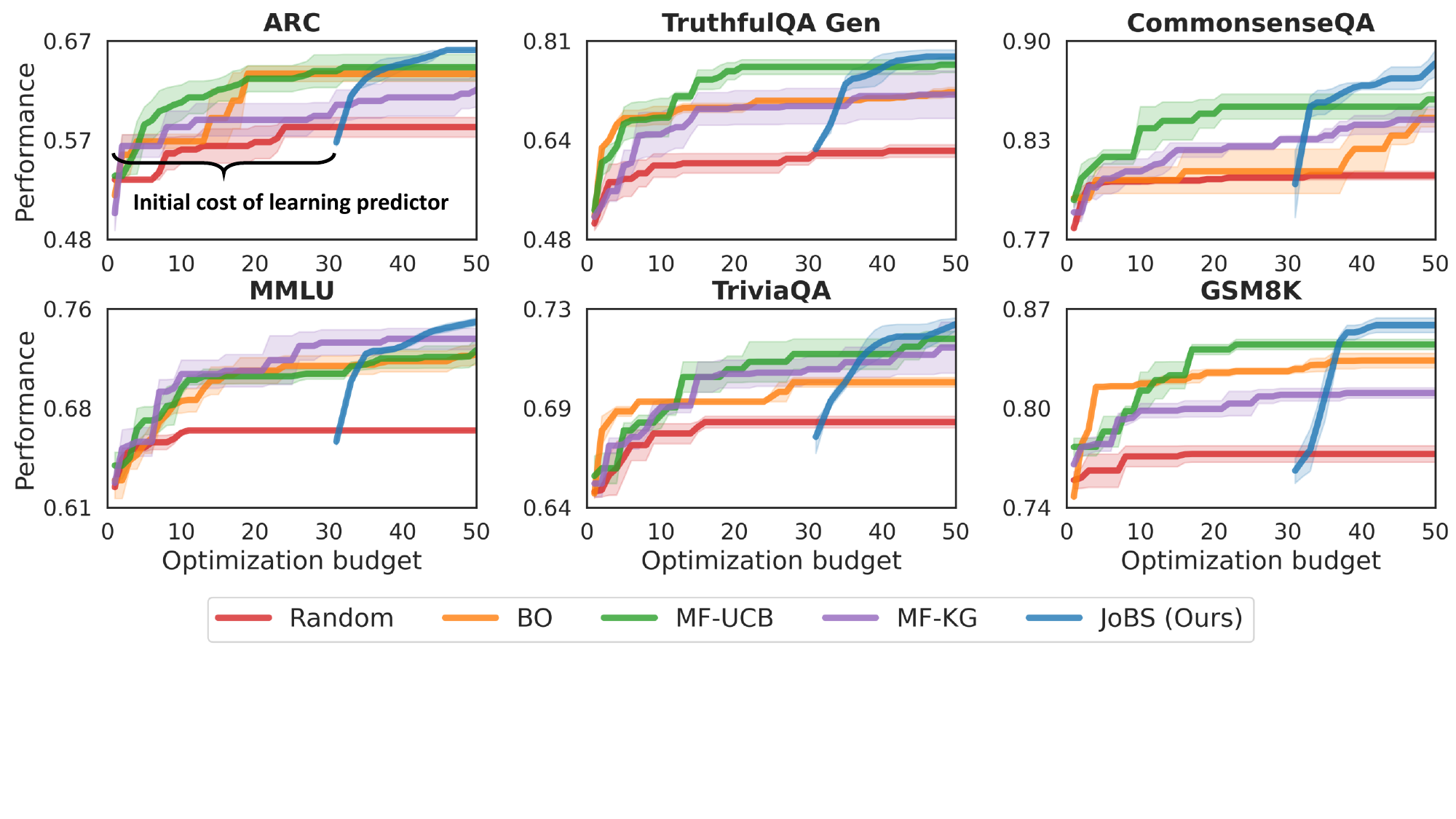}
\vspace{-17mm}
\caption{Comparison of LLM performance of best-found configuration at each iteration of \JointSLBO{} as compared with other BO-centric approaches under the same total optimization budget of $50000$ training steps, across different language tasks. \JointSLBO{}'s plot begins later because it allocates an initial budget to learn a performance predictor. Optimization budget is 50000 training steps.}
\label{fig:BO-convergence}
\vspace{-5mm}
\end{figure*}


\textbf{Model configuration.} The model configurations $\mathcal{M}$ we consider are LoRA hyperparameters. These include: which LLM layers to apply LoRA to, which LLM modules to apply LoRA to (e.g., $Q$-projection), LoRA rank, LoRA dropout and LoRA alpha, resulting in $10$ model configuration dimensions in total. More information on the training configurations is provided in App.~\ref{app:exp-details}.
\vspace{-2mm}
\subsection{Baselines}\label{sec:baseline-description}

\textbf{Data optimization.} \textbf{LESS} \cite{xia2024lessselectinginfluentialdata} searches for more relevant data points based on their training gradients. \textbf{DoReMi} \cite{xie2023doremioptimizingdatamixtures} adopts a distributionally robust approach to produce data-mixtures that work well for all evaluation task distributions. Influence Function (\textbf{IF}) \cite{koh2020understandingblackboxpredictionsinfluence} selects data points with the highest influence scores. \textbf{Diversity} \cite{wang2024diversity-log-det} finds the subset of data points with the highest log-determinant score.

\textbf{Model optimization.} We use a variant of Differentiable Architecture Search (\textbf{DARTS}) \cite{darts} applied to our LoRA weights by tuning an additional mixture coefficient on each LLM layer (so, when this coefficient approaches zero for a layer, we do not apply LoRA to that layer). \textbf{AutoLoRA} \cite{zhang2024autoloraautomaticallytuningmatrix} is a baseline that automatically tunes the LoRA rank, but does not consider how we should select the layers to apply LoRA to. \textbf{RoBoT} \cite{he2024robustifyingboostingtrainingfreeneural} adopts a training-free approach to select different model configurations by aggregating different training-free metrics to measure how promising a given configuration is.

\textbf{Multi-fidelity BO}. We would like to emphasize that our work is the first to use BO to jointly optimize data and model components. Hence, our aim is to compare using a predictor to amortize the cost of full-training runs against other fidelity regimes. We consider multi-fidelity BO baselines that use two discrete fidelities at $\Bsmall=100$ and $B=1000$. MF-KG uses Knowledge Gradients \cite{wu2019practicalmultifidelitybayesianoptimization} and MF-UCB uses cost-aware UCB \cite{bo-gp-ucb-10}. Both implementations adopt an acquisition formula that is cost-aware (\url{https://botorch.org/docs/tutorials/multi_fidelity_bo/}). More details on baseline implementations can be found in App.~\ref{app:baseline-details}.

\subsection{Comparison with Data and Model Optimization Approaches}\label{subsec:results-compare-data-model-opt}

\begin{table}[h]
\fontsize{15pt}{15pt}\selectfont
  \centering
  \caption{Combination matrix of different model and data optimization methods on GSM$8$K (higher is better). Subscripts denote standard deviations over $5$ trials.}
  \label{table:table-mix-and-match-gsm8k}

  {\large
  \resizebox{\columnwidth}{!}{%
    \begin{tabular}{@{}llccccccc@{}}
      \toprule
      \multicolumn{2}{c}{\bfseries $\downarrow$ Model | Data $\rightarrow$}
        & Default & LESS & DoReMi & IF & Diversity & \JointSLBO{} \\
      \midrule
        & Default  & $71.9_{\pm 1.5}$ & $71.1_{\pm1.2}$ & $72.3_{\pm3.2}$ & $68.7_{\pm 1.0}$ & $74.4_{\pm 2.0}$ & - \\
        & DARTS        & $73.1_{\pm0.9}$ & $71.7_{\pm0.7}$ & $74.7_{\pm1.4}$ & $69.3_{\pm0.5}$ & $66.8_{\pm0.8}$ & - \\
        & AutoLoRA     & $72.9_{\pm1.2}$ & $75.2_{\pm0.4}$ & $70.9_{\pm0.8}$ & $68.5_{\pm0.5}$ & $74.0_{\pm0.6}$ & - \\
        & RoBoT        & $71.8_{\pm0.7}$ & $72.7_{\pm1.6}$ & $74.0_{\pm1.9}$ & $73.1_{\pm1.6}$ & $70.2_{\pm1.8}$ & - \\
        & \JointSLBO{} & - & - & - & - & - &
          \cellcolor{darkgreen!50}\textbf{\underline{86.4}}$_{\pm 1.2}$ \\
      \bottomrule
    \end{tabular}%
  }}
  \vspace{-2mm}
\end{table}


    
    
    
    

First, we mixed and matched conventional data optimization and model architecture optimization methods and applied them to \texttt{Llama-3-8B-Instruct}'s training components independently. Due to space constraints, we only display the results for \textbf{GSM$8$K} here. Our result in Table~\ref{table:table-mix-and-match-gsm8k} shows that pairing data and model optimization methods independently yields worse LLM performance than \JointSLBO{}. This is because independently applying data and model optimization methods does not consider their joint interaction. In addition, these baselines (compared to BO-centric methods) cannot exploit performance feedback effectively. In contrast, \JointSLBO{} models the complex interaction between data and model configurations and uses performance feedback directly to optimize them jointly. By doing so, \JointSLBO{} performs significantly better than other baselines. The results for other tasks are shown in App.~\ref{app:more-exp-results} and are generally consistent with our findings.

\begin{figure*}[t]
  \centering

  \begin{subfigure}[t]{0.33\textwidth}
    
    \caption{Varying $N$}
    \label{fig:ablation-varying-N}\includegraphics[width=\linewidth]{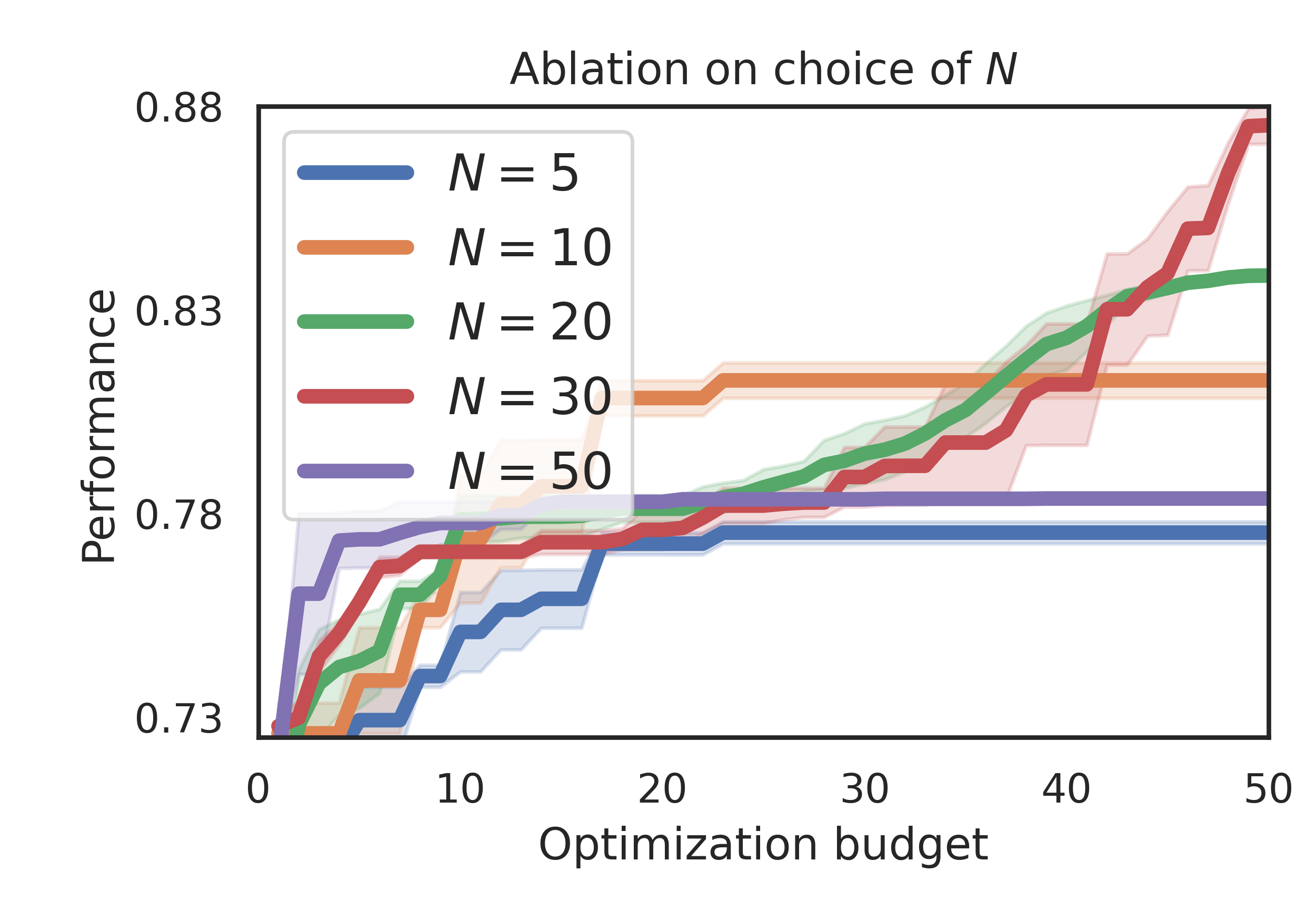}
    
  \end{subfigure}
  \hfill
  \begin{subfigure}[t]{0.33\textwidth}

    \caption{Varying $\Bsmall$}
    \label{fig:ablation-varying-b_small}\includegraphics[width=\linewidth]{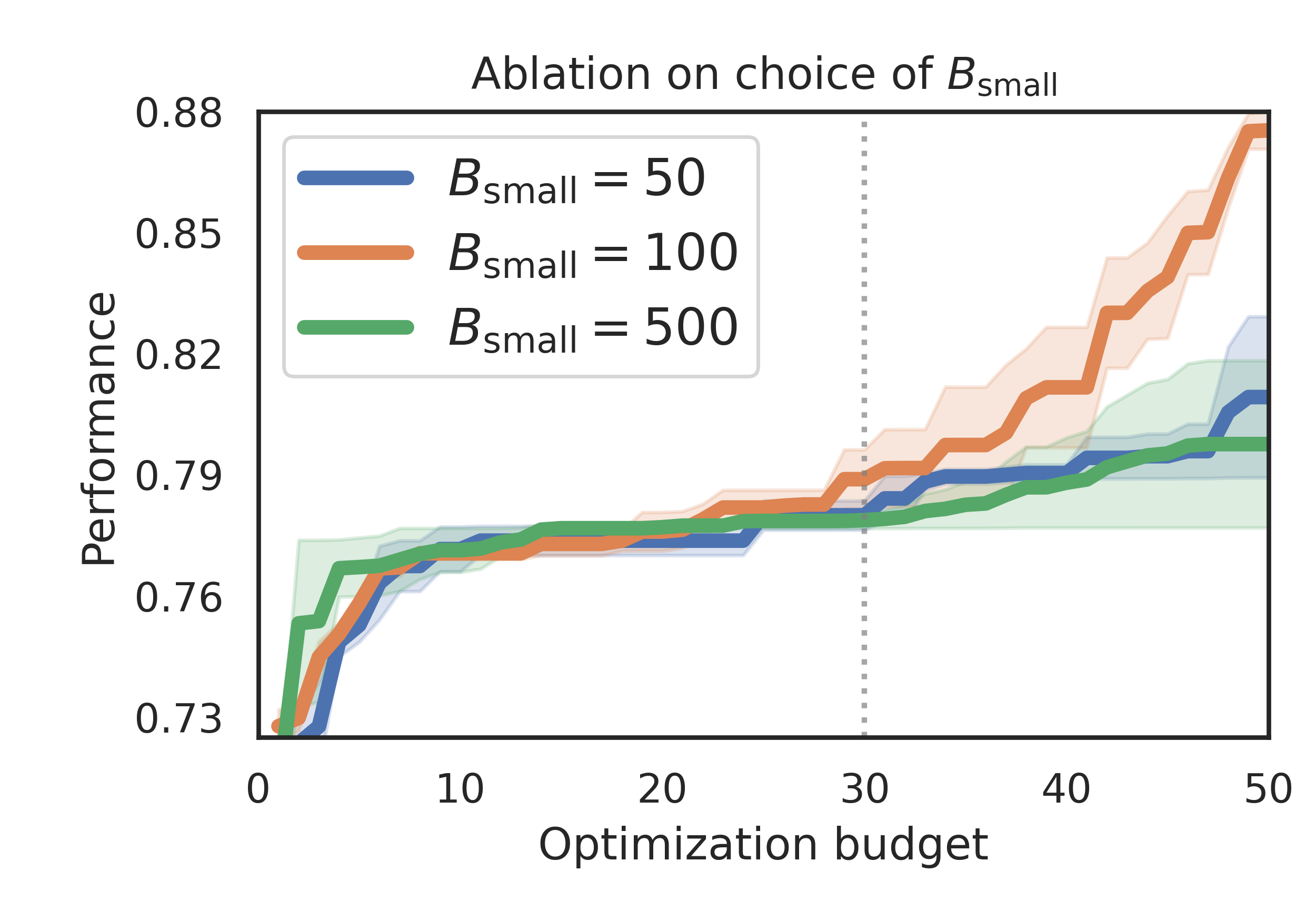}
    
  \end{subfigure}
  \begin{subfigure}[t]{0.33\textwidth}

  \centering
  \caption{Wall-clock runtime}
  \label{fig:wall-clock}

  \vspace{5mm}
  \footnotesize
  \setlength{\tabcolsep}{8pt}

  \begin{tabular}{lc}
    \toprule
    \bfseries Method & \bfseries Time (hours) \\
    \midrule
    LESS        & $16.3$ \\
    DoReMi      & $18.5$ \\
    IF          & $52$ \\
    Diversity   & $122$ \\
    \JointSLBO{} & \cellcolor{darkgreen!50}13.3 \\
    \bottomrule
  \end{tabular}
    
  \end{subfigure}

  \caption{(a) Ablation on choice of $N$, the amount of budget used to learn the performance predictor. (b) Ablation on choice of $\Bsmall$. (c) Wall-clock runtime of \JointSLBO{} compared to other data optimization approaches.}
  \vspace{-5mm}
\end{figure*}


\subsection{Comparison with Multi-Fidelity BO (MF-BO) Approaches}\label{subsec:results-compare-mf-bo}


Next, we compare \JointSLBO{} with a variety of MF-BO baselines mentioned earlier in Sec.~\ref{sec:baseline-description} in Fig.~\ref{fig:BO-convergence} across all six language tasks. Given the two discrete fidelities of $\Bsmall=100$ and $B=1000$ mentioned earlier, we allocated the same budget of $C=50000$ steps for all methods to ensure fairness. Fig.~\ref{fig:BO-convergence} shows the best LLM evaluation performance achieved by each method under the same amount of total optimization budget. It is important to note that \JointSLBO{} allocates a portion of the total optimization budget to learn the predictor initially and hence has a shorter line plot that starts later. The result shows that after a fixed amount of optimization budget, \JointSLBO{} consistently outperforms MF-BO baselines across all six tasks in terms of optimizing the evaluation task performance (Fig.~\ref{fig:BO-convergence}). This suggests that using a neural network is more effective than using a covariance kernel to model the relationship between low and high fidelity performance observations \citep{wu2019practicalmultifidelitybayesianoptimization}. Furthermore, another weakness of MF-BO lies in the handcrafted cost-aware acquisition function \cite{yen2025datamixtureoptimizationmultifidelity} that cannot directly control how many low and high fidelity observations are queried. In contrast, our work allows us to directly control the number ($N$) of high-fidelity observations used to build the performance predictor. Notably, we can even drive an \textit{optimal} $N$ (approximately 25 to 35) in our setting.

\textbf{Additional results on larger models, training settings, and performance metric}. We also extended our experiments to larger models (\texttt{Qwen3-14B}) and the full-parameter fine-tuning setting. Additionally, we repeated the experiment in Fig.~\ref{fig:BO-convergence} to minimize evaluation loss to showcase the flexibility of our approach. The results are shown in App.~\ref{app:more-exp-results} and are generally consistent with our main results, suggesting \JointSLBO{} is equally effective across different model families, sizes, and training settings.


\subsection{Ablation and Additional Analysis}\label{subsec:ablation}

We run several ablations to investigate key design choices in \JointSLBO{}. For instance, how does the choice of $N$ influence the convergence of \JointSLBO{} under a fixed optimization budget? What happens if we adjusted $\Bsmall$? Unless otherwise stated, our ablations are performed with \texttt{Llama-3-8B-Instruct} and evaluated on the CommonsenseQA task.

\textbf{Optimal number of initial training samples $N$.} Fig.~\ref{fig:ablation-varying-N} illustrates the tradeoff (mentioned earlier in Sec.~\ref{subsec:performance-tradeof}) between the number of initial samples $N$ used to train the performance predictor $\mathcal{F}$ and the number of BO iterations under a fixed optimization budget. Increasing $N$ improves the accuracy of predictor $\mathcal{F}$, but also consumes more optimization budget, leaving fewer BO iterations available. As such, a value of $N$ that is too small or too large degrades \JointSLBO{}'s performance. Empirically, under the optimization budget described in Sec.~\ref{sec:exp}, we find that
$N=30$ strikes a good balance between predictor quality and number of BO iterations; this also agrees with our theoretical results in Sec.~\ref{subsec:convergence-theorem}. This suggests that this $N$ value optimizes the average regret bounds shown in Theorem~\ref{thm:convergence-main}. 

\textbf{Effects of varying $\Bsmall$.}  Fig.~\ref{fig:ablation-varying-b_small} shows the tradeoff in the choice of $\Bsmall$. A large $\Bsmall$ consumes more optimization budget and reduces the number of available BO iterations. On the other hand, a small $\Bsmall$ makes it difficult for us to predict the LLM performance. Both cases degrade optimization performance. In our experiments, we find that using $\Bsmall=100$ (10\% of $B=1000$) serves as a good balance, allowing \JointSLBO{} to accurately predict future LLM performance while leaving sufficient budget for optimization later on.

\textbf{Computational cost and other qualitative discussion.} Lastly, Table~\ref{fig:wall-clock} shows that the wall-clock runtime of \JointSLBO{} (for budget of $C=50000$) is lower than that of existing data optimization baselines (model optimization baselines are anytime algorithms so runtime comparison is not necessary). We provide a detailed computational cost analysis in App.~\ref{app:compute-cost}. We also present a few interesting case studies of the optimal LLM training configurations found by \JointSLBO{} in App.~\ref{app:qualitative-config-found-analysis}.

\vspace{-2mm}
\section{Conclusion and Future Work}
We illustrated the chicken-and-egg dilemma in LLM training and introduced \JointSLBO{}, an algorithm that leverages BO and a novel scaling-law-inspired performance predictor to jointly optimize data and model configurations efficiently. We analyzed the tradeoff between the budget used to learn the predictor and the leftover budget for optimization, showing that \JointSLBO{} enjoys theoretical guarantees and better empirical performance than other baselines.

We foresee several exciting future research directions that address \JointSLBO{}'s limitations. For instance, can we incorporate existing scaling law formulas into GP priors directly to improve optimization efficiency? Can we reuse the same performance predictor \textit{across} different tasks to improve amortization efficiency? Answering these questions can help reduce the computational cost of \JointSLBO{} even further.

\section*{Impact Statement}

This paper presents work whose goal is to jointly optimize LLM data and model configurations and advance the field of Machine
Learning. There are many potential societal consequences of our work, none of
which we feel must be specifically highlighted here.


\bibliography{example_paper}

@article{wu2019practicalmultifidelitybayesianoptimization,
      title={Practical Multi-fidelity Bayesian Optimization for Hyperparameter Tuning}, 
      author={Jian Wu and Saul Toscano-Palmerin and Peter I. Frazier and Andrew Gordon Wilson},
      year={2019},
      journal={arXiv:1903.04703},
}

@article{frazier2018tutorialbayesianoptimization,
      title={A Tutorial on Bayesian Optimization}, 
      author={Peter I. Frazier},
      year={2018},
      journal={arXiv:1807.02811}
}

@article{shukor2025scalinglawsoptimaldata,
      title={Scaling Laws for Optimal Data Mixtures}, 
      author={Mustafa Shukor and Louis Bethune and Dan Busbridge and David Grangier and Enrico Fini and Alaaeldin El-Nouby and Pierre Ablin},
      year={2025},
      journal={arXiv:2507.09404},
}

@misc{eval-harness,
  author       = {Gao, Leo and Tow, Jonathan and Abbasi, Baber and Biderman, Stella and Black, Sid and DiPofi, Anthony and Foster, Charles and Golding, Laurence and Hsu, Jeffrey and Le Noac'h, Alain and Li, Haonan and McDonell, Kyle and Muennighoff, Niklas and Ociepa, Chris and Phang, Jason and Reynolds, Laria and Schoelkopf, Hailey and Skowron, Aviya and Sutawika, Lintang and Tang, Eric and Thite, Anish and Wang, Ben and Wang, Kevin and Zou, Andy},
  title        = {The Language Model Evaluation Harness},
  month        = 07,
  year         = 2024,
  publisher    = {Zenodo},
  version      = {v0.4.3},
  doi          = {10.5281/zenodo.12608602},
  url          = {https://zenodo.org/records/12608602}
}

@article{zhang2024scalingmeetsllmfinetuning,
      title={When Scaling Meets LLM Finetuning: The Effect of Data, Model and Finetuning Method}, 
      author={Biao Zhang and Zhongtao Liu and Colin Cherry and Orhan Firat},
      year={2024},
      journal={arXiv:2402.17193},
}

@article{hendrycks2021measuringmassivemultitasklanguage,
      title={Measuring Massive Multitask Language Understanding}, 
      author={Dan Hendrycks and Collin Burns and Steven Basart and Andy Zou and Mantas Mazeika and Dawn Song and Jacob Steinhardt},
      year={2021},
      journal={arXiv:2009.03300},
}

@article{wikitext-data,
      title={Pointer Sentinel Mixture Models}, 
      author={Stephen Merity and Caiming Xiong and James Bradbury and Richard Socher},
      year={2016},
      journal={arXiv:1609.07843},
}

@article{gsm8k,
      title={Training Verifiers to Solve Math Word Problems}, 
      author={Karl Cobbe and Vineet Kosaraju and Mohammad Bavarian and Mark Chen and Heewoo Jun and Lukasz Kaiser and Matthias Plappert and Jerry Tworek and Jacob Hilton and Reiichiro Nakano and Christopher Hesse and John Schulman},
      year={2021},
      journal={arXiv:2110.14168},
}

@article{pubmedqa,
      title={PubMedQA: A Dataset for Biomedical Research Question Answering}, 
      author={Qiao Jin and Bhuwan Dhingra and Zhengping Liu and William W. Cohen and Xinghua Lu},
      year={2019},
      journal={arXiv:1909.06146},
}

@inproceedings{sciq,
    title = "Crowdsourcing Multiple Choice Science Questions",
    author = "Welbl, Johannes  and
      Liu, Nelson F.  and
      Gardner, Matt",
    booktitle = "Workshop on Noisy User-generated Text",
    year = {2017},
}

@article{triviaQA,
      title={TriviaQA: A Large Scale Distantly Supervised Challenge Dataset for Reading Comprehension}, 
      author={Mandar Joshi and Eunsol Choi and Daniel S. Weld and Luke Zettlemoyer},
      year={2017},
      journal={arXiv:1705.03551},
}

@article{xie2023doremioptimizingdatamixtures,
      title={DoReMi: Optimizing Data Mixtures Speeds Up Language Model Pretraining}, 
      author={Sang Michael Xie and Hieu Pham and Xuanyi Dong and Nan Du and Hanxiao Liu and Yifeng Lu and Percy Liang and Quoc V. Le and Tengyu Ma and Adams Wei Yu},
      year={2023},
      journal={arXiv:2305.10429}
}

@article{hu2021lora,
  title={Lora: Low-rank adaptation of large language models},
  author={Hu, Edward J and Shen, Yelong and Wallis, Phillip and Allen-Zhu, Zeyuan and Li, Yuanzhi and Wang, Shean and Wang, Lu and Chen, Weizhu},
  journal={arXiv:2106.09685},
  year={2021}
}

@article{chen2025duet,
  title={DUET: Optimizing Training Data Mixtures via Feedback from Unseen Evaluation Tasks},
  author={Chen, Zhiliang and Lau, Gregory Kang Ruey and Foo, Chuan-Sheng and Low, Bryan Kian Hsiang},
  journal={arXiv:2502.00270},
  year={2025}
}

@inproceedings{tay2023bayesian_cost,
  title={Bayesian Optimization with Cost-varying Variable Subsets},
  author={Tay, Sebastian Shenghong and Foo, Chuan-Sheng and Urano, Daisuke and Leong, Richalynn and Low, Bryan Kian Hsiang},
  booktitle={Proc. NeurIPS},
  year={2023}
}

@article{ye2024datamixinglawsoptimizing,
      title={Data Mixing Laws: Optimizing Data Mixtures by Predicting Language Modeling Performance}, 
      author={Jiasheng Ye and Peiju Liu and Tianxiang Sun and Yunhua Zhou and Jun Zhan and Xipeng Qiu},
      year={2024},
      journal={arXiv:2403.16952},
}

@article{kaplan2020scalinglawsneurallanguage,
      title={Scaling Laws for Neural Language Models}, 
      author={Jared Kaplan and Sam McCandlish and Tom Henighan and Tom B. Brown and Benjamin Chess and Rewon Child and Scott Gray and Alec Radford and Jeffrey Wu and Dario Amodei},
      year={2020},
      journal={arXiv:2001.08361},
}

@article{hoffmann2022trainingcomputeoptimallargelanguage,
      title={Training Compute-Optimal Large Language Models}, 
      author={Jordan Hoffmann and Sebastian Borgeaud and Arthur Mensch and Elena Buchatskaya and Trevor Cai and Eliza Rutherford and Diego de Las Casas and Lisa Anne Hendricks and Johannes Welbl and Aidan Clark and Tom Hennigan and Eric Noland and Katie Millican and George van den Driessche and Bogdan Damoc and Aurelia Guy and Simon Osindero and Karen Simonyan and Erich Elsen and Jack W. Rae and Oriol Vinyals and Laurent Sifre},
      year={2022},
      journal={arXiv2203.15556},
}

@article{xia2024lessselectinginfluentialdata,
      title={LESS: Selecting Influential Data for Targeted Instruction Tuning}, 
      author={Mengzhou Xia and Sadhika Malladi and Suchin Gururangan and Sanjeev Arora and Danqi Chen},
      year={2024},
      journal={arXiv:2402.04333}
}

@article{koh2020understandingblackboxpredictionsinfluence,
      title={Understanding Black-box Predictions via Influence Functions}, 
      author={Pang Wei Koh and Percy Liang},
      year={2020},
      journal={arXiv:1703.04730},
}

@inproceedings{
zhang2025_diversity_data_selection,
title={Harnessing Diversity for Important Data Selection in Pretraining Large Language Models},
author={Chi Zhang and Huaping Zhong and Kuan Zhang and Chengliang Chai and Rui Wang and Xinlin Zhuang and Tianyi Bai and Qiu Jiantao and Lei Cao and Ju Fan and Ye Yuan and Guoren Wang and Conghui He},
booktitle={Proc. ICLR},
year={2025}
}

@article{wang2024diversity-log-det,
      title={Diversity Measurement and Subset Selection for Instruction Tuning Datasets}, 
      author={Peiqi Wang and Yikang Shen and Zhen Guo and Matthew Stallone and Yoon Kim and Polina Golland and Rameswar Panda},
      year={2024},
      journal={arXiv:2402.02318},
}

@article{raschka2020modelevaluationmodelselection,
      title={Model Evaluation, Model Selection, and Algorithm Selection in Machine Learning}, 
      author={Sebastian Raschka},
      year={2020},
      journal={arXiv:1811.12808}, 
}

@article{ai2_arc,
      title={Think you have Solved Question Answering? Try ARC, the AI2 Reasoning Challenge}, 
      author={Peter Clark and Isaac Cowhey and Oren Etzioni and Tushar Khot and Ashish Sabharwal and Carissa Schoenick and Oyvind Tafjord},
      year={2018},
      journal={arXiv:1803.05457},
}

@article{darts,
      title={DARTS: Differentiable Architecture Search}, 
      author={Hanxiao Liu and Karen Simonyan and Yiming Yang},
      year={2019},
      journal={arXiv:1806.09055},
}

@article{bananas,
      title={BANANAS: Bayesian Optimization with Neural Architectures for Neural Architecture Search}, 
      author={Colin White and Willie Neiswanger and Yash Savani},
      year={2020},
      journal={arXiv:1910.11858},
}

@article{sobol,
      title={Practical Batch Bayesian Optimization for Less Expensive Functions}, 
      author={Vu Nguyen and Sunil Gupta and Santu Rana and Cheng Li and Svetha Venkatesh},
      year={2018},
      journal={arXiv:1811.01466},
}

@article{he2024robustifyingboostingtrainingfreeneural,
      title={Robustifying and Boosting Training-Free Neural Architecture Search}, 
      author={Zhenfeng He and Yao Shu and Zhongxiang Dai and Bryan Kian Hsiang Low},
      year={2024},
      journal={arXiv:2403.07591},
}

@article{zhang2024autoloraautomaticallytuningmatrix,
      title={AutoLoRA: Automatically Tuning Matrix Ranks in Low-Rank Adaptation Based on Meta Learning}, 
      author={Ruiyi Zhang and Rushi Qiang and Sai Ashish Somayajula and Pengtao Xie},
      year={2024},
      journal={arXiv:2403.09113},
}

@inproceedings{bo-kernelized-bandits,
  title={On kernelized multi-armed bandits},
  author={Chowdhury, Sayak Ray and Gopalan, Aditya},
  booktitle={Proc. ICML},
  year={2017},
}

@inproceedings{chen2024towardsautoai,
  title = 	 {Towards {A}uto{AI}: Optimizing a Machine Learning System with Black-box and Differentiable Components},
  author =       {Chen, Zhiliang and Foo, Chuan-Sheng and Low, Bryan Kian Hsiang},
  booktitle = 	 {Proc. ICML},
  year = 	 {2024}}

@article{truthfulQA,
      title={TruthfulQA: Measuring How Models Mimic Human Falsehoods}, 
      author={Stephanie Lin and Jacob Hilton and Owain Evans},
      year={2022},
      journal={arXiv:2109.07958},
}

@inproceedings{attention,
 author = {Vaswani, Ashish and Shazeer, Noam and Parmar, Niki and Uszkoreit, Jakob and Jones, Llion and Gomez, Aidan N and Kaiser, \L ukasz and Polosukhin, Illia},
title = 	{Attention is all you need},
 booktitle = {Proc. Neurips},
 year = {2017}
}

@article{wang2024helpfulharmfuldatafinetuningfree,
      title={Helpful or Harmful Data? Fine-tuning-free Shapley Attribution for Explaining Language Model Predictions}, 
      author={Jingtan Wang and Xiaoqiang Lin and Rui Qiao and Chuan-Sheng Foo and Bryan Kian Hsiang Low},
      year={2024},
      journal={arXiv:2406.04606},
}

@article{scaling_law_performance,
      title={Scaling Laws for Predicting Downstream Performance in LLMs}, 
      author={Yangyi Chen and Binxuan Huang and Yifan Gao and Zhengyang Wang and Jingfeng Yang and Heng Ji},
      year={2025},
      journal={arXiv:2410.08527},
      archivePrefix={arXiv},
}

@article{scaling_law_performance_2,
      title={Performance Law of Large Language Models}, 
      author={Chuhan Wu and Ruiming Tang},
      year={2024},
      journal={arXiv:2408.09895},
}

@article{talmor2019commonsenseqaquestionansweringchallenge,
      title={CommonsenseQA: A Question Answering Challenge Targeting Commonsense Knowledge}, 
      author={Alon Talmor and Jonathan Herzig and Nicholas Lourie and Jonathan Berant},
      year={2019},
      journal={arXiv:1811.00937}
}

@book{gp-for-ml,
  title={Gaussian processes for machine learning},
  author={Williams, Christopher KI and Rasmussen, Carl Edward},
  volume={2},
  year={2006},
  publisher={MIT press Cambridge, MA}
}

@inproceedings{bo-gp-ucb-10,
author = {Srinivas, Niranjan and Krause, Andreas and Kakade, Sham and Seeger, Matthias},
title = {Gaussian Process Optimization in the Bandit Setting: No Regret and Experimental Design},
year = {2010},
booktitle = {Proc. ICML},
}

@InProceedings{bos-zhongxiang,
  title = 	 {{B}ayesian Optimization Meets {B}ayesian Optimal Stopping},
  author =       {Dai, Zhongxiang and Yu, Haibin and Low, Bryan Kian Hsiang and Jaillet, Patrick},
  booktitle = 	 {Proc. ICML},
  year = 	 {2019}
}

@article{grattafiori2024llama,
  title={The llama 3 herd of models},
  author={Grattafiori, Aaron and Dubey, Abhimanyu and Jauhri, Abhinav and Pandey, Abhinav and Kadian, Abhishek and Al-Dahle, Ahmad and Letman, Aiesha and Mathur, Akhil and Schelten, Alan and Vaughan, Alex and others},
  journal={arXiv preprint arXiv:2407.21783},
  year={2024}
}

@article{gao2020pile800gbdatasetdiverse,
      title={The Pile: An 800GB Dataset of Diverse Text for Language Modeling}, 
      author={Leo Gao and Stella Biderman and Sid Black and Laurence Golding and Travis Hoppe and Charles Foster and Jason Phang and Horace He and Anish Thite and Noa Nabeshima and Shawn Presser and Connor Leahy},
      year={2020},
      journal={arXiv:2101.00027}
}

@article{radford2019language,
  title={Language models are unsupervised multitask learners},
  author={Radford, Alec and Wu, Jeffrey and Child, Rewon and Luan, David and Amodei, Dario and Sutskever, Ilya and others},
  journal={OpenAI blog},
  volume={1},
  number={8},
  pages={9},
  year={2019}
}

@article{data_selection_1,
      title={Data Selection for Language Models via Importance Resampling}, 
      author={Sang Michael Xie and Shibani Santurkar and Tengyu Ma and Percy Liang},
      year={2023},
      journal={arXiv:2302.03169},
}

@article{chen2025aioliunifiedoptimizationframework,
      title={Aioli: A Unified Optimization Framework for Language Model Data Mixing}, 
      author={Mayee F. Chen and Michael Y. Hu and Nicholas Lourie and Kyunghyun Cho and Christopher Ré},
      year={2025},
      journal={arXiv:2411.05735},
}

@article{liu2025regmixdatamixtureregression,
      title={RegMix: Data Mixture as Regression for Language Model Pre-training}, 
      author={Qian Liu and Xiaosen Zheng and Niklas Muennighoff and Guangtao Zeng and Longxu Dou and Tianyu Pang and Jing Jiang and Min Lin},
      year={2025},
      journal={arXiv:2407.01492},
}

@article{lee2025costsensitivefreezethawbayesianoptimization,
      title={Cost-Sensitive Freeze-thaw Bayesian Optimization for Efficient Hyperparameter Tuning}, 
      author={Dong Bok Lee and Aoxuan Silvia Zhang and Byungjoo Kim and Junhyeon Park and Steven Adriaensen and Juho Lee and Sung Ju Hwang and Hae Beom Lee},
      year={2025},
      journal={arXiv:2510.21379},
}

@article{swersky2014freezethawbayesianoptimization,
      title={Freeze-Thaw Bayesian Optimization}, 
      author={Kevin Swersky and Jasper Snoek and Ryan Prescott Adams},
      year={2014},
      journal={1406.3896}
}

@misc{xie2025chameleonflexibledatamixingframework,
      title={Chameleon: A Flexible Data-mixing Framework for Language Model Pretraining and Finetuning}, 
      author={Wanyun Xie and Francesco Tonin and Volkan Cevher},
      year={2025},
      journal={arXiv:2505.24844}
}

@inproceedings{amortize_BO_1,
  title = 	 {Amortized Bayesian Optimization over Discrete Spaces},
  author =       {Swersky, Kevin and Rubanova, Yulia and Dohan, David and Murphy, Kevin},
  booktitle = {Proc. UAI},
year = {2020}
}

@misc{yen2025datamixtureoptimizationmultifidelity,
      title={Data Mixture Optimization: A Multi-fidelity Multi-scale Bayesian Framework}, 
      author={Thomson Yen and Andrew Wei Tung Siah and Haozhe Chen and Tianyi Peng and Daniel Guetta and Hongseok Namkoong},
      year={2025},
      journal={arXiv:2503.21023},
}
\bibliographystyle{icml2026}

\newpage
\appendix
\onecolumn

\section{Relationship with Multi-Fidelity Bayesian Optimization}\label{app:relation-to-mf-bo}

Multi-fidelity BO works typically have the option to query the black-box function at high-fidelity or low-fidelity, with low-fidelity queries being cheaper (and the acquisition function is incentivized to query them). High-fidelity function queries follow the real black-box function faithfully, while low-fidelity queries are inaccurate estimations of the real function. Multi-fidelity BO considers an \textit{additional dimension of fidelity} in the input variables and uses a cost-aware acquisition function \cite{wu2019practicalmultifidelitybayesianoptimization} to propose which fidelity to query at every BO iteration.

In our setting, observing the LLM performance after a fraction of the training steps can be considered a low-fidelity evaluation, while observing the LLM performance after full-training can be viewed as a high-fidelity evaluation. \JointSLBO{} first uses many high-fidelity evaluations to build a performance predictor. Then, we perform BO entirely using low-fidelity evaluations given by the predictor. \JointSLBO{} models the relationship between low-fidelity observations and high-fidelity observations explicitly with an expressive neural network (called a performance predictor in our work), \textit{which is decoupled from the GP surrogate model}. In contrast, Multi-fidelity BO \textit{jointly} models the relationship between low and high-fidelity observations and inputs with a GP (by treating fidelity as an additional input dimension). While this sounds practical on paper, a GP is often not expressive enough to model them jointly and hence Multi-fidelity BO on its own does not work well in our experiments. The same can be said for early-stopping BO \cite{bos-zhongxiang}, which uses a GP to model different fidelities. In addition, most of these works do not work out the optimal queries to each fidelity. To overcome these limitations, our work provides theoretical guidelines for choosing an optimal number of high-fidelity observations to query (Sec.~\ref{subsec:convergence-theorem}).

There is another line of work called freeze-thaw BO \cite{swersky2014freezethawbayesianoptimization,lee2025costsensitivefreezethawbayesianoptimization} that decides whether to continue training/fine-tuning an old training configuration or to explore a new set of training configurations. We do not find this approach suitable for LLM training for a few reasons. \textit{First}, they require storing multiple copies of fine-tuned LLMs with different training configurations, which is memory-intensive. \textit{Second}, they typically use an exponential decay kernel to model the increase in LLM performance (or decrease in loss) with more training time. However, Fig.~\ref{fig:scaling-law} shows that in our setting, the change in LLM performance \textit{can even decrease} with more training steps because a chosen data mixture might be irrelevant for the downstream task. As such, existing exponential decay kernel introduced in these works does not work well in our setting.

\section{Additional Background on Bayesian Optimization}\label{app:BO}

At every BO iteration, the BO algorithm proposes the next configuration $x_{t+1}$ 
as the configuration which maximizes some acquisition function, such as the
\emph{upper confidence bound} (UCB) \cite{bo-gp-ucb-10}, given by $x_{t+1} = \argmax_x \mu_t(x) + \beta_{t+1}\sigma_t(x)$, where $\beta_{t+1}$ is an exploration parameter. 

A common kernel choice is the \textit{squared exponential} (SE) kernel $\kappa(x,x')\triangleq \exp(-\lVert x-x'\rVert_2^2/(2m^2))$ with a  \textit{length-scale} hyperparameter $m$ that can be learned via maximum likelihood estimation from observations.


\section{Proof of Theorem \ref{thm:convergence-main}}\label{proof-BO-convergence}

\convergence*

\begin{proof}


To begin, recall that we are trying to maximize our LLM performance, a black-box function $\mathcal{L}(\theta_{\mathcal{X}, \mathcal{M}, B})$ (Sec.~\ref{sec:prelim}). Using our scaling law predictor (Sec.~\ref{sec:scaling-law}), we instead train our LLM for $\Bsmall$ training steps (or time) and observe $\mathcal{L}(\theta_{\mathcal{X}, \mathcal{M}, \Bsmall})$. We then use the performance predictor to produce $\mathcal{F}(\mathcal{L}(\theta_{\mathcal{X}, \mathcal{M}, \Bsmall}))$, estimating the LLM performance if we trained it fully for $B$ training steps. Since we are predicting the LLM performance, our model prediction is noisy with $\mathcal{F}(\mathcal{L}(\theta_{\mathcal{X}, \mathcal{M}, \Bsmall})) = {\mathcal{L}}(\theta_{\mathcal{X}, \mathcal{M}, B}) + \epsilon$. Hence, we only have access to a noisy estimate of our black-box function: ${\mathcal{L}}(\theta_{\mathcal{X}, \mathcal{M}, B}) + \epsilon$. The prediction error is also $R$-sub-Gaussian as suggested from our empirical findings (Fig.~\ref{fig:prediction-error-f-samples-N}). As such, this formulation is in line with the BO algorithmic framework introduced in Sec.~\ref{sec:BO}, where we observe noisy observations of the true underlying function.

Next, we aim to draw relationship between the choice of $N$ with our prediction error $\epsilon$ and consequentially, \JointSLBO{}'s regret. First, we present the following lemma from \cite{bo-kernelized-bandits}:

\begin{lemma}
\label{lemma:concentration}
    Let $||f||_{\kappa}=\sqrt{	\langle f,f \rangle_\kappa} \leq \mathcal{B}$. Also, assume that the observation noise associated with each BO iteration is $R$-sub-Gaussian with $R>0$. Then with probability at least $1-\delta$, the following holds for BO iteration $t \leq T$:
    \begin{equation}
        |\mu_t(x)-f(x)| \leq \left(\mathcal{B} + R \sqrt{2(\gamma_t + 1 + \ln(1/\delta)}\right)\sigma_t(x)
    \end{equation}
    \label{eq:lemma-concentration}
    where $\gamma_{t}$ is the maximum information gain after $t$ observations and $\mu_t(x), \sigma_t^2(x)$ are mean and variance of posterior distribution of GP defined in Eq.~\ref{gp:posterior} with $\zeta=1+2/T$.
\end{lemma}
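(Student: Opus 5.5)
The plan is the standard RKHS argument of \citet{bo-kernelized-bandits}. We split the posterior-mean error into a deterministic \emph{bias} part and a stochastic \emph{noise} part, and bound each by a multiple of the posterior standard deviation. Throughout, $\sigma_t(x)$ is read as the square root of the expression in Eq.~\ref{gp:posterior}, the posterior standard deviation. We write $\zeta = 1+\eta$ with $\eta = 2/T$.

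Let $\bm{f}_t=[f(x_\tau)]_{\tau\le t}$ and $\bm{\epsilon}_t=[\epsilon_\tau]_{\tau\le t}$, so that $\bm{y}_t=\bm{f}_t+\bm{\epsilon}_t$. Then
\[
\mu_t(x)-f(x)=\big(\kappa_t^\top(x)(K_t+\zeta I)^{-1}\bm{f}_t-f(x)\big)+\kappa_t^\top(x)(K_t+\zeta I)^{-1}\bm{\epsilon}_t .
\]

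\textbf{Bias term.} Work with the feature map $\varphi$ of $\kappa$, so that $f(x)=\langle f,\varphi(x)\rangle_\kappa$. Let $\Phi_t$ be the operator stacking $\varphi(x_1),\dots,\varphi(x_t)$, and set $V_t=\Phi_t^\top\Phi_t+\zeta I$. The push-through identity $\Phi_t^\top(K_t+\zeta I)^{-1}=V_t^{-1}\Phi_t^\top$ turns the bias into $-\zeta\langle f, V_t^{-1}\varphi(x)\rangle_\kappa$. Cauchy--Schwarz in the $V_t^{-1}$-geometry then gives
\[
\big|\kappa_t^\top(x)(K_t+\zeta I)^{-1}\bm{f}_t-f(x)\big|\le \lVert f\rVert_\kappa\,\zeta\,\lVert\varphi(x)\rVert_{V_t^{-1}}
\le \mathcal{B}\sqrt{\zeta}\,\sigma_t(x),
\]
using $\zeta\lVert\varphi(x)\rVert^2_{V_t^{-1}}=\sigma_t^2(x)$ and $\zeta\,\lVert\cdot\rVert_{V_t^{-1}}\le\sqrt{\zeta}\lVert\cdot\rVert$ on the relevant scale. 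The factor $\sqrt{\zeta}\le\sqrt{3}$ is a constant, so I will either absorb it into the stated form or follow the original normalization, where the constant comes out as exactly $\mathcal{B}$.

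\textbf{Noise term.} Again by Cauchy--Schwarz,
\[
|\kappa_t^\top(x)(K_t+\zeta I)^{-1}\bm{\epsilon}_t|\le\sigma_t(x)\,\lVert\bm{\epsilon}_t\rVert_{((K_t+\eta I)^{-1}+I)^{-1}},
\]
where the matrix identity uses $\zeta=1+\eta$. The key step is a self-normalized martingale inequality for the noise, proved by the method of mixtures. For a Gaussian mixing variable $g\sim\mathcal{N}(0,K_t)$, the process $\exp\!\big(\bm{\epsilon}_t^\top g/R-\tfrac12\lVert g\rVert^2\big)$ is a nonnegative supermartingale, by conditional $R$-sub-Gaussianity of each $\epsilon_\tau$. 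Integrating over $g$ and applying Ville's maximal inequality with a stopping-time argument gives, simultaneously for all $t$ with probability at least $1-\delta$,
\[
\lVert\bm{\epsilon}_t\rVert^2_{((K_t+\eta I)^{-1}+I)^{-1}}\le 2R^2\ln\frac{\sqrt{\det((1+\eta)I+K_t)}}{\delta}.
\]

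\textbf{Log-determinant.} I then bound the log-determinant by information gain:
\[
\tfrac12\ln\det((1+\eta)I+K_t)\le \tfrac12\ln\det(I+(1+\eta)^{-1}K_t)+\tfrac{t}{2}\ln(1+\eta)\le\gamma_t+\tfrac{t\eta}{2}\le\gamma_t+1,
\]
since $t\le T$ and $\eta=2/T$. Substituting gives the noise bound $R\sqrt{2(\gamma_t+1+\ln(1/\delta))}\,\sigma_t(x)$. Adding the bias and noise bounds yields the lemma.

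I expect the main obstacle to be the self-normalized concentration step. It must hold uniformly over $t$, because $x_t$ depends on past noise. It also lives in a possibly infinite-dimensional RKHS, which is why I work with finite-dimensional kernel matrices and Gaussian mixtures rather than with feature vectors. A secondary subtlety is tracking the regularizer $\zeta=1+\eta$ so that the constants in the bias term and the $+1$ in the log-determinant come out exactly as stated.
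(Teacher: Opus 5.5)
Your proposal reconstructs the proof of Theorem~2 in \citet{bo-kernelized-bandits}, which is where the paper takes this lemma from; the paper states it with a citation and does not reprove it. The bias/noise split, the feature-space Cauchy--Schwarz, the method-of-mixtures self-normalized bound, and the estimate $\tfrac12\ln\det((1+\eta)I+K_t)\le\gamma_t+t\eta/2\le\gamma_t+1$ are all the same route. Two small repairs are needed to get exactly the stated constants.

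\textbf{The $\sqrt{\zeta}$ in the bias bound.} This factor is an artifact of your estimate. You cannot ``absorb'' it, because the lemma has explicit constants and the main proof sets $\beta_t$ to exactly this value. Cauchy--Schwarz in the $V_t^{-1}$-geometry gives $\zeta\,\lVert f\rVert_{V_t^{-1}}\lVert\varphi(x)\rVert_{V_t^{-1}}$. Since $V_t\succeq\zeta I$, you have $\lVert f\rVert_{V_t^{-1}}\le\zeta^{-1/2}\lVert f\rVert_\kappa$, and $\lVert\varphi(x)\rVert_{V_t^{-1}}=\zeta^{-1/2}\sigma_t(x)$. The product is therefore exactly $\lVert f\rVert_\kappa\,\sigma_t(x)\le\mathcal{B}\,\sigma_t(x)$. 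You lost the factor $\zeta^{-1/2}$ on the $f$ side by bounding $\lVert f\rVert_{V_t^{-1}}$ by $\lVert f\rVert_\kappa$.

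\textbf{The mixing covariance.} Mixing over $g\sim\mathcal{N}(0,K_t)$ does not give the $\eta$-regularized inequality you display. It gives $\lVert\bm{\epsilon}_t\rVert^2_{K_t(K_t+I)^{-1}}\le 2R^2\ln\bigl(\sqrt{\det(I+K_t)}/\delta\bigr)$. To obtain your displayed version, mix over a vector with covariance $K_t+\eta I$. For example, take $g_s=h(x_s)+\sqrt{\eta}\,\xi_s$, with $h$ a sample from $\mathrm{GP}(0,\kappa)$ and $\xi_s$ i.i.d.\ standard normal, all independent of the noise. Either version suffices, for two reasons:
\begin{itemize}
\item the noise term is at most $\zeta^{-1/2}\sigma_t(x)\lVert\bm{\epsilon}_t\rVert_{K_t(K_t+\zeta I)^{-1}}\le\sigma_t(x)\lVert\bm{\epsilon}_t\rVert_{K_t(K_t+I)^{-1}}$, because $K_t(K_t+\zeta I)^{-1}\preceq K_t(K_t+I)^{-1}$ for $\zeta\ge 1$;
\item $\det(I+K_t)\le\det(\zeta I+K_t)$, so your log-determinant step still yields $\gamma_t+1$.
\end{itemize}
In both cases the mixing object must be drawn once and evaluated along the predictable sequence $x_1,x_2,\dots$, not redrawn as a fresh $\mathcal{N}(0,K_t)$ for each $t$. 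Only then is the mixture a single nonnegative supermartingale to which Ville's inequality applies uniformly in $t$.

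\textbf{Hypothesis.} You are right to use conditional $R$-sub-Gaussianity of $\epsilon_\tau$ given the history, with $x_\tau$ predictable. That is the hypothesis the cited result actually requires, even though the lemma as worded in the paper only says ``$R$-sub-Gaussian''.
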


In our setting, set $f=\mathcal{L}$ (our LLM performance after fine-tuning) and $x=\mathcal{X},\mathcal{M}$ (our LLM training configuration). This lemma indicates that our estimated mean $\mu_t(x)$ of our performance landscape from our fitted GP over historical observations of LLM performance deviates from the true LLM performance $f(x)={\mathcal{L}}(\theta_{\mathcal{X}, \mathcal{M}, B})$ by at most the term in Eq.~\ref{eq:lemma-concentration}.

We are now ready to prove Theorem~\ref{thm:convergence-main}. First, we observe that the next LLM training configuration $x_t$ at each BO iteration $t$ is chosen via the IGP-UCB acquisition function (i.e., $x_t = \argmax_{x} \mu_{t-1}(x) + \beta_t \sigma_{t-1}(x)$ and $\beta_{t} = \mathcal{B} + R \sqrt{2(\gamma_{t-1}+1+\ln(1/\delta))}$ where the observation noise associated with each BO iteration is $R$-sub Gaussian). Thus, we can see that at each iteration $t \geq 1$, we have $\mu_{t-1}(x_t) + \beta_t \sigma_{t-1}(x_t) \geq \mu_{t-1}(x^*) + \beta_t \sigma_{t-1}(x^*)$. It then follows that for all $t \geq 1$ and with probability at least $1-\delta$,
\begin{equation}
    \begin{split}
        |f(x^*)-f(x_t)| &\stackrel{(1)}{\leq} \beta_t \sigma_{t-1}(x_t) + \mu_{t-1}(x_t) - f(x_t) \\
        &\stackrel{(2)}{\leq} \beta_t \sigma_{t-1}(x_t) + \mu_{t-1}(x_t) + (\beta_t\sigma_{t-1}(x_t) - \mu_{t-1}(x_t)) \\
        &\leq 2\beta_t \sigma_{t-1}(x_t)
    \end{split}
    \label{eq:bounded-f}
\end{equation}
where $\stackrel{(1)}{\leq}$ uses the fact that via Lemma~\ref{lemma:concentration} and our acquisition function, $f(x^*) \leq \beta_t \sigma_{t-1}(x^*) + \mu_{t-1}(x^*) \leq \beta_t \sigma_{t-1}(x_t) + \mu_{t-1}(x_t)$ and $\stackrel{(2)}{\leq}$ once again uses Lemma~\ref{lemma:concentration}.

Next, we adjust the total number of BO iterations in our problem setting, depending on the chosen $N$. As mentioned in the analysis of Sec.~\ref{subsec:performance-tradeof}, under an overall optimization budget of $C$, collecting $N$ initial full-training runs leaves enough budget for $\floor{\frac{C-NB}{\Bsmall}}$ BO iterations. Hence, we analyze how \JointSLBO{}'s cumulative regret grows for up to $T=\floor{\frac{C-NB}{\Bsmall}}$ iterations. Using Eq.~\ref{eq:bounded-f}, we can bound the cumulative regret by 
\begin{equation}
\sum_{t=1}^{T} r_t = \sum_{t=1}^{T} (f(x^*) - f(x_t) ) \leq 2\sum_{t=1}^{T} \beta_t \sigma_{t-1}(x_t)\ .
\end{equation}
Since we know that $\displaystyle \sum_{t=1}^{T} \sigma_{t-1}(x_t) = \mathcal{O}(\sqrt{{T}\gamma_{T}})$ and used $\beta_t = \mathcal{B} + R \sqrt{2(\gamma_{t-1}+1+\ln(1/\delta))}$, the cumulative regret in Theorem~\ref{thm:convergence-main} can be written as
\begin{align}
    \mathcal{R}_{T} &= \sum_{t=1}^{T} r_t \\
    &\leq 2 \sum_{t=1}^{T} \beta_t \sigma_{t-1}(x_t) \\
    &\leq 2 \mathcal{O}(\sqrt{{T}\gamma_{T}}) (\mathcal{B} + R \sqrt{2(\gamma_{T}+1+\ln(1/\delta))}) \\
    &= \mathcal{O}\left(\mathcal{B}\sqrt{{T} \gamma_{T}} + R\sqrt{{T}} \sqrt{\gamma_{T}^2 + \gamma_{T} \ln(1/\delta)}\right) \\
    &\stackrel{(1)}{=} \mathcal{O}\left(\mathcal{B}\sqrt{{\floor*{\frac{C-NB}{\Bsmall}}} \gamma_{T}} + R\sqrt{{\floor*{\frac{C-NB}{\Bsmall}}}} \sqrt{\gamma_{T}^2 + \gamma_{T} \ln(1/\delta)}\right) \\
    &\stackrel{(2)}{=} \mathcal{O}\left(\mathcal{B}\sqrt{{\floor*{\frac{C-NB}{\Bsmall}}} \gamma_{T}} + \frac{k}{\sqrt{N}}\sqrt{{\floor*{\frac{C-NB}{\Bsmall}}}} \sqrt{\gamma_{T}^2 + \gamma_{T} \ln(1/\delta)}\right),
\end{align}
where $\stackrel{(1)}{=}$ uses the fact that if we collected $N$ full-training runs initially, the number of BO iterations available is $T=\floor*{\frac{C-NB}{\Bsmall}}$ and $\stackrel{(2)}{=}$ uses \textbf{Assumption 1} that our prediction error is $R$-sub-Gaussian with $R=\frac{k}{\sqrt{N}}$.

Lastly, because different number of initial full-training runs $N$ will influence the number of remaining BO iterations, we derive the average regret w.r.t.~the number of BO iterations by dividing the cumulative regret bounds obtained in Eq.~(14) by $\floor*{\frac{C-NB}{\Bsmall}}$ throughout:

\begin{align}
    \frac{\mathcal{R}_{T}}{T} &\stackrel{(1)}{=} 
    \mathcal{O}\left(\mathcal{B}\sqrt{{\frac{\gamma_{T} \Bsmall}{C-NB-\Bsmall}}} + \frac{k}{\sqrt{N}}\sqrt{\frac{\Bsmall}{C-NB-\Bsmall}} \sqrt{\gamma_{T}^2 + \gamma_{T} \ln(1/\delta)}\right) \\
    &= \mathcal{O}\left(\left(\sqrt{{\frac{\Bsmall}{C-\textcolor{black}{N}B-\Bsmall}}}\right)\left(\mathcal{B}\sqrt{\gamma_{T}} + \frac{k}{\sqrt{\textcolor{black}{N}}} \sqrt{\gamma_{T}^2 + \gamma_{T} \ln(1/\delta)}\right)\right)
\end{align}
where $\stackrel{(1)}{=}$ divides the cumulative regret derived in Eq.~(14) throughout by $T=\floor*{\frac{C-NB}{\Bsmall}}$ and considers the fact that $\frac{1}{\floor*{\frac{a}{b}}} \leq \frac{b}{a-b}$ if $a\neq0$.

One interesting point to note is that our algorithm incurs some regret when collecting $N$ random training configurations initially. However, since our black-box function is bounded (under the SE-kernel) and $N$ is not assumed to be dependent on the total optimization budget $T$, then the average regret incurred when collecting training configurations initially can be removed in the big-O notation.
\end{proof}

\subsection{Comparison of Regret Bound in Vanilla BO}\label{app:comparison-vanilla-BO}

We will show the conditions in which the average regret upper-bound of \JointSLBO{} is smaller than that of the vanilla BO case. To simplify notation, let $T$ be the number of BO iterations available in the vanilla BO case. Recall that in normal BO (without any performance predictor), the cumulative regret is \cite{chen2024towardsautoai,bo-kernelized-bandits}
\begin{equation}
    \mathcal{O}\left(\mathcal{B}\sqrt{{T} \gamma_{T}} + R\sqrt{{T}} \sqrt{\gamma_{T}^2 + \gamma_{T} \ln(1/\delta)}\right).
\end{equation}
Dividing by $T$, we obtain an average regret of:
\begin{equation}
    \mathcal{O}\left(\frac{\mathcal{B}\sqrt{\gamma_{T}}}{\sqrt{T}} + \frac{R \sqrt{\gamma_{T}^2 + \gamma_{T} \ln(1/\delta)}}{\sqrt{T}}\right).
\end{equation}

Consider that in our case, \JointSLBO{} devotes a constant number of iterations (e.g., $N << T$) to learn the performance predictor, which yields noisy observations with error distribution that is $R_1$-sub-Gaussian, where $R_1 > R$. This means the predictor does not recover the original performance landscape perfectly. By doing so, each BO iteration consumes 10\% of the original resources (since we only need to fine-tune an LLM to 10\% of the original number of training steps) and hence, we can perform $10(T-N)$ more BO iterations. Assuming that the average regret of the first $N$ iteration where we make random queries is constant (we can ignore it in the asymptotic bound), this implies the average regret bound of \JointSLBO{} is:

\begin{equation}
    \mathcal{O}\left(\frac{\mathcal{B}\sqrt{\gamma_{T}}}{\sqrt{10(T-N)}} + \frac{R_1 \sqrt{\gamma_{T}^2 + \gamma_{T} \ln(1/\delta)}}{\sqrt{10(T-N)}}\right).
\end{equation}
Note that this is essentially the same bound as Theorem~\ref{thm:convergence-main} but without the complication of deriving the expression for the number of BO acquisition steps available under a limited optimization budget. Comparing Eq.(16) and Eq.(17), we see that the average regret bound of \JointSLBO{} is smaller than vanilla BO with probability at least $1-\delta$ if and only if:
\begin{equation}
    \left(\mathcal{B}\sqrt{\gamma_T} + \textcolor{red}{R_1}\sqrt{\gamma_{T}^2 + \gamma_{T} \ln(1/\delta)}\right) < \left(\mathcal{B}\sqrt{\gamma_T} + \textcolor{red}{R}\sqrt{\gamma_{T}^2 + \gamma_{T} \ln(1/\delta)}\right) \textcolor{red}{\sqrt{\frac{10(T-N)}{T}}}.
\end{equation}
The differences between the left and right side are highlighted in \textcolor{red}{red} and this inequality admits a neat form that is easy to interpret: $R_1$ is larger than $R$ (i.e., using a predictor gives noisier observations), but as long as $\textcolor{red}{\sqrt{\frac{10(T-N)}{T}}}$ is much larger (i.e., we can run more BO iterations; it is almost 10 times larger in this case), the average regret of \JointSLBO{} will be smaller than that found in the vanilla BO case.

\section{Additional Experimental Details}\label{app:exp-details}
We provide details of how we performed our experiments for \JointSLBO{}. Our data configuration consists of $9$ parameters representing the mixing ratio (a probability simplex). Our model configuration consists of $10$ parameters, which represent the following:

\begin{enumerate}
    \item Number of LLM layers to apply LoRA to $\in [1,31]$ (this varies for different LLMs, depending on how many transformer layers are present),
    \item Whether to apply LoRA to front layers or rear layers (binary decision),
    \item Whether to apply LoRA to $Q$-projection layer (binary decision),
    \item Whether to apply LoRA to $V$-projection layer (binary decision),
    \item Whether to apply LoRA to $K$-projection layer (binary decision),
    \item Whether to apply LoRA to MLP-Up-projection layer (binary decision),
    \item Whether to apply LoRA to MLP-Down-projection layer (binary decision),
    \item LoRA rank $\in [1,256]$,
    \item LoRA dropout $\in [0,1]$, and 
    \item LoRA alpha $\in [1,500]$.
\end{enumerate}

In the baselines in which we do not optimize a certain component, we adopt the following \textbf{default fine-tuning configuration}: (a) Apply LoRA on all LLM layers, (b) apply LoRA to all $Q$-, $V$-, $K$-, MLP-Up-, MLP-Down-projection layers, (c) LoRA rank $= 16$, (d) dropout $= 0.1$, (e) alpha $= 16$, and (f) uniform training data mixture. These component configurations align with many off-the-shelf LLM training configurations provided in existing papers and online tutorials. So, in the case where we only optimize data configuration, we will use the default model configuration above. In all our main results, we used $8$-shot prompting with chain-of-thought. We used a shortened training time of $\Bsmall=100$ training steps and an optimization budget of $C=50000$ training steps.

To build our performance scaling law predictor (Sec.~\ref{sec:scaling-law}), we collected a random Sobol sequence \cite{sobol} of $30$ LLM training configurations and fine-tuned the LLM for $B=1000$ training steps. This is around 1.5 training epochs. During training, we collected the LLM validation loss or performance (depending on the experimental setting) at training step intervals of $25$. We build a dataset consisting of the observed loss or performance at training steps $=25, 50, 75, 100$ as the neural network input. We also observe the full fine-tuning loss or performance at a large timestep $B=1000$ training steps and treat it as the prediction ground-truth. Using this dataset, we train a densely-connected, $64$-width, $3$ layers neural network $\mathcal{F}$ to predict the full fine-tuning performance.

This random sequence of ground-truths collected initially is also added to our initial GP model to warm-start BO in \JointSLBO{}. This leaves an optimization budget for $\frac{50000 - 30\times1000}{100}=200$ BO iterations of cheaper trials in \JointSLBO{}. We used a SE-kernel for the GP used to model our LLM performance landscape, and ran our experiments with the \texttt{BoTorch} library. At the end of every iteration, we use maximum-likelihood to estimate the hyperparameters in kernel. We normalize and rescale all our LLM training configuration parameters to be between $0$ and $1$ when fitting our GP. We also adopted a SE-kernel for our GP. Throughout our experiments, we used 4 \texttt{H200} GPUs and a training batch size of 64.

\section{Implementation Details of Baselines}\label{app:baseline-details}

\textbf{LESS} \cite{xia2024lessselectinginfluentialdata}. We pass each training data from each data domain into the LLM to construct a gradient store \citep{xia2024lessselectinginfluentialdata}. Since we do not know the exact evaluation task (just like the setting considered in \citet{chen2025duet}), we form a validation mixture consisting of uniformly sampled data from each data domain, before retrieving a subset of $10000$ data points from the gradient store with the highest feature similarity with the evaluation dataset. We follow the implementation given in \url{
https://github.com/princeton-nlp/LESS}.

\textbf{DoReMi} \cite{xie2023doremioptimizingdatamixtures}. We use DoReMi to optimize the data mixing ratio across the training data domains, according to the implementation given in \url{https://github.com/sangmichaelxie/doremi}. Again, since we do not have direct access to the evaluation task (as DoReMi requires an explicit validation dataset), we form the validation dataset using a uniform validation mixture similar to LESS (see above).

\textbf{IF} \cite{koh2020understandingblackboxpredictionsinfluence}. We compute the influence scores of each data point (following the implementation in \url{https://github.com/alstonlo/torch-influence} w.r.t.~a uniform validation mixture constructed from the training data domains (see above). Then, we retrieve the top $10000$ data points with the highest influence scores.

\textbf{Diversity} \cite{wang2024diversity-log-det}. We first project every data point into a continuous semantic space using an off-the-shelf embedding model. Then, we retrieve the data subset with the highest log-determinant value, which is derived by arranging data point's embedding in a matrix and computing the determinant. There is a greedy implementation \cite{wang2024diversity-log-det} that improves the computational efficiency.

\textbf{DARTS} \cite{darts}. We optimize the same model configurations detailed in App.~\ref{app:exp-details} by introducing a differentiable parameter for each model component (e.g., LoRA rank). For example, LoRA rank is controlled by a single differentiable parameter, while LoRA layer selection is controlled by a softmax layer with weights, allowing us to drop certain LoRA layers (if the weight is decreased to 0 after update). We alternate between freezing the differentiable parameters that control the model configuration and model parameters that control the LLM behavior every 5 training steps. We optimize these objectives in an alternating fashion, switching between updating configuration parameters and model parameters every 5 steps.

\textbf{AutoLoRA} \cite{zhang2024autoloraautomaticallytuningmatrix}. We follow the implementation in \url{https://github.com/ruz048/AutoLoRA}, which tunes the LoRA rank for each layer.

\textbf{RoBoT} \cite{he2024robustifyingboostingtrainingfreeneural}. We utilize the training-free proxies given in \cite{he2024robustifyingboostingtrainingfreeneural} and perform BO over the weights assigned to each training-free proxies. With the optimized weights, we then iterate through the model configurations to find the one with the best (weighted) training-free measure.

\textbf{MF-KG} \cite{wu2019practicalmultifidelitybayesianoptimization}. The implementation is reproduced from \url{https://botorch.org/docs/tutorials/multi_fidelity_bo/}.

\textbf{MF-UCB}. We use a cost-aware UCB acquisition function that follows the framework in \citet{yen2025datamixtureoptimizationmultifidelity}.

\section{Computational Cost of \JointSLBO{} vs.~Baselines}\label{app:compute-cost}


\textbf{Quantitative comparison of wall-clock hours.} \JointSLBO{} is run for $50000$ training steps in total. This equates to approximately $50000$ seconds of wall-clock time ($13.3$ hours) using a H200 GPU. All model selection methods \cite{he2024robustifyingboostingtrainingfreeneural,darts} used in our paper are iterative in nature and require repeated fine-tuning of LLMs. Equal time budget was allocated to both \JointSLBO{} and all baseline methods for a fair comparison. We found that given equal computation time, \JointSLBO{} achieves better performance than all model selection methods. For data selection methods (LESS, DoReMi, IF, Diversity), we record their wall-clock runtime in Table~\ref{table:computation-time}. In general, we found data selection methods to be more computationally expensive than \JointSLBO{}.

\begin{table}[h]
  \centering
  \caption{Wall-clock runtime comparison of data selection methods vs.~\JointSLBO{}.}
  \begin{tabular}{lc}
    \toprule
    \bfseries Method & \bfseries Time (hours) \\
    \midrule
    LESS       & $16.3$ \\
    DoReMi     & $18.5$ \\
    IF         & $52$ \\
    Diversity  & $122$ \\
    \JointSLBO{} & $\cellcolor{darkgreen!50}\mathbf{\underline{13.3}}$ \\
    \bottomrule
  \end{tabular}
  \label{table:computation-time}
\end{table}

\section{Optimal LLM Training Configurations Found by \JointSLBO{} Compared to Baselines}\label{app:qualitative-config-found-analysis}



We show some of the optimal LLM training configurations found by \JointSLBO{} vs.~other baselines. We divided the configurations into two tables detailing the best data (Table~\ref{table-optimal-ratios}) and model (Table~\ref{table-optimal-model-config}) configurations found for the \textbf{GSM$8$K} evaluation task.

\begin{table}[h]
\centering
\setlength{\tabcolsep}{4pt} 
\caption{Optimal data mixing ratio found by \JointSLBO{} vs.~other baselines. The columns denote the ratio allocated to each training domain.}\label{table-optimal-ratios}
\begin{tabular}{c *{9}{c}} 
\toprule
 & CQA & GSM$8$K  & PubmedQA & SciQ & TrivQA & TruthQA & Wiki & MMLU & ARC \\
\midrule
\JointSLBO{} & 0 & 0.34 &  0 & 0.10 & 0.19 & 0 & 0.21 & 0.16 & 0 \\
\midrule
DoReMi & 0.08 & 0.11 & 0.18 & 0.05 & 0.08 & 0.14 & 0.04 & 0.16 & 0.13 \\
\bottomrule
\end{tabular}
\end{table}
Of particular interest is that \JointSLBO{} optimizes the data mixture by placing greater weights on some data domains based on their evaluation performance on the downstream task (in this case, GSM$8$K). Specifically, \JointSLBO{} successfully inferred (without knowing that the evaluation task is GSM$8$K) that domains such as GSM$8$K, SciQ, TriviaQA, Wikipedia, and MMLU contain some math information and therefore includes them in the optimized data mixture.

On the other hand, DoReMi is a distributionally robust data mixing method and results in a more uniform data mixing ratio. As a result, the output data mixtures are not specialized for any single evaluation task, leading to lower performance when evaluated on specific individual tasks, such as GSM$8$K.
\begin{table}[h]
\centering
\caption{Optimal model configuration found by \JointSLBO{} vs.~other baselines.} \label{table-optimal-model-config}
\begin{tabular}{c *{10}{c}} 
\toprule
 & Rank & NumLayers & Order & Q & K & V & Up & Down & dropout & $\alpha$ \\
\midrule
\JointSLBO{} & 36 & 25 & 1 & 1 & 0 & 1 & 1 & 0 & 0.112 & 64\\
\midrule
DARTS & 12 & 13 & 0 & 1 & 1 & 1 & 1 & 0 & 0.058 & 45 \\
\bottomrule
\end{tabular}
\end{table}


Next, we examine the optimal model configurations found by \JointSLBO{}. We noticed that \JointSLBO{} prefers a higher LoRA rank and number of layers (i.e., how many layers to apply LoRA), but chooses to apply LoRA to only certain transformer layers. In particular, \JointSLBO{} found that for the GSM$8$K evaluation task, fine-tuning $Q$-, $V$-, and MLP-Up-projection layers is sufficient to achieve good fine-tuning performance, and we should fine-tune the rear layers instead of the front layers (Order $= 1$).

\section{Additional Experimental Results and Discussion}\label{app:more-exp-results}
In Tables~\ref{table:table-mix-and-match-truthful}, \ref{table:mix-and-match-additional-commonsense}, \ref{table:mix-and-match-additional-MMLU}, \ref{table:mix-and-match-additional-ARC}, and~\ref{table:mix-and-match-additional-TriviaQA}, we repeated the experimental set-up from Table~\ref{table:table-mix-and-match-gsm8k} by performing mix-and-match on different model and data selection methods over $5$ other evaluation tasks (TruthfulQA, CommonsenseQA, MMLU, ARC, and TriviaQA). The results show that \JointSLBO{} outperforms all combinations of data and model selection works. This suggests that jointly optimizing both data and model configurations does indeed produce \textit{interaction improvement} over optimizing the configurations independently. In addition, from running our experiments, we find our approach significantly easier to implement in code.

\begin{table*}[h!]
  \centering
  \caption{{TruthfulQA} \cite{truthfulQA}.}
  \vspace{-2mm}
    \resizebox{\textwidth}{!}{%
    \begin{tabular}{llcccccccc}
      \toprule
      \multicolumn{2}{c}{\bfseries $\downarrow$ Model | Data $\rightarrow$}
        & Default & LESS & DoReMi & IF & Diversity  & BO & \JointSLBO{} \\
      \midrule
      \multirow{4}{*}{}
        & Default  & $59.8_{\pm 1.9}$ & $60.7_{\pm1.0}$ & $62.9_{\pm2.6}$ & $62.3_{\pm 1.3}$ & $64.5_{\pm 1.2}$            & $74.8_{\pm1.0}$ & -\\
        & DARTS        & $61.9_{\pm1.3}$ & $62.4_{\pm0.6}$ &  $66.7_{\pm1.3}$ &  $63.8_{\pm0.5}$ & $64.8_{\pm1.2}$ &  $77.0_{\pm0.9}$ & -   \\
        & AutoLoRA        &  $61.7_{\pm1.0}$   & $66.9_{\pm1.2}$ & $64.5_{\pm1.1}$  &   $65.3_{\pm1.2}$   & $65.9_{\pm0.6}$  & $72.5_{\pm0.5}$ & - \\
        & RoBoT   & $64.2_{\pm0.6}$  & $65.8_{\pm0.7}$ & $58.9_{\pm1.3}$ &  $56.9_{\pm1.0}$ & $65.8_{\pm0.6}$ &  $73.9_{\pm1.3}$ &   -      \\
        & BO   & $66.8_{\pm1.2}$  & $67.9_{\pm0.5}$ & $69.8_{\pm0.9}$ & $70.9_{\pm1.0}$ & $64.7_{\pm1.4}$ & $76.1_{\pm2.0}$ & -     \\
        & \JointSLBO{}   & - & -  & - & - & -&  -& $\cellcolor{darkgreen!50}\mathbf{\underline{80.6}}_{\pm1.1}$    \\
      
      \bottomrule
    \end{tabular}%
  }
  \label{table:table-mix-and-match-truthful}
\end{table*}

      

\begin{table*}[h!]
  \centering
  \caption{CommonsenseQA \cite{talmor2019commonsenseqaquestionansweringchallenge}.}
  \resizebox{\textwidth}{!}{%
    \begin{tabular}{llcccccccc}
      \toprule
      \multicolumn{2}{c}{\bfseries $\downarrow$ Model | Data $\rightarrow$}
        & Default & LESS & DoReMi & IF & Diversity  & BO & \JointSLBO{} \\
      \midrule
      \multirow{4}{*}{}
        & Default  & $76.3_{\pm 1.0}$ & $73.0_{\pm0.8}$ & $74.2_{\pm1.7}$ & $79.3_{\pm 0.7}$ & $77.4_{\pm 1.7}$ & $80.6_{\pm0.8}$ & - \\
        & DARTS        & $79.6_{\pm1.3}$ & $76.3_{\pm1.7}$ &  $76.1_{\pm1.1}$ &  $73.7_{\pm1.2}$ & $80.1_{\pm1.1}$ &  $79.6_{\pm0.6}$ & -   \\
        & AutoLoRA        &  $78.9_{\pm0.9}$   & $79.8_{\pm0.4}$ & $76.1_{\pm0.5}$  &   $77.9_{\pm1.2}$   & $78.0_{\pm1.0}$  & $81.5_{\pm1.0}$ & - \\
        & RoBoT   & $74.9_{\pm0.8}$  & $75.5_{\pm0.9}$ & $77.1_{\pm0.9}$ &  $79.4_{\pm1.5}$ & $76.3_{\pm0.9}$ &  $80.2_{\pm0.2}$ & -      \\
        & BO   & $79.7_{\pm1.3}$  & $79.4_{\pm0.3}$ & $77.0_{\pm0.4}$ & $81.1_{\pm0.9}$ & $79.4_{\pm1.1}$ & $80.7_{\pm1.2}$ & -     \\
        & \JointSLBO{}   & - &  - & - & - & -&  -& $\cellcolor{darkgreen!50}\mathbf{\underline{84.3}}{\pm2.4}$       \\
      
      \bottomrule
    \end{tabular}%
  }
  \label{table:mix-and-match-additional-commonsense}
\end{table*}

\vspace{-5mm}
\begin{table*}[h!]
  \centering
  \caption{MMLU \cite{hendrycks2021measuringmassivemultitasklanguage}.}
  \resizebox{\textwidth}{!}{%
    \begin{tabular}{llcccccccc}
      \toprule
      \multicolumn{2}{c}{\bfseries $\downarrow$ Model | Data $\rightarrow$}
        & Default & LESS & DoReMi & IF & Diversity  & BO & \JointSLBO{} \\
      \midrule
      \multirow{4}{*}{}
        & Default  & $61.2_{\pm1.3}$ & $63.5_{\pm0.9}$ & $59.7_{\pm1.8}$ & $57.9_{\pm0.6}$ & $62.1_{\pm1.4}$ & $64.2_{\pm1.2}$ & - \\
        & DARTS        & $58.3_{\pm0.7}$ & $61.0_{\pm2.1}$ & $62.9_{\pm1.0}$ & $55.7_{\pm1.6}$ & $60.1_{\pm0.5}$ & $63.4_{\pm2.0}$ & - \\
        & AutoLoRA     & $62.5_{\pm1.4}$ & $64.3_{\pm0.6}$ & $60.8_{\pm2.2}$ & $58.2_{\pm1.9}$ & $63.7_{\pm0.8}$ & $61.5_{\pm1.1}$ & - \\
        & RoBoT        & $59.9_{\pm0.9}$ & $60.7_{\pm1.2}$ & $63.4_{\pm1.7}$ & $61.5_{\pm1.5}$ & $58.3_{\pm2.3}$ & $62.1_{\pm0.7}$ & - \\
        & BO           & $55.8_{\pm1.8}$ & $57.2_{\pm0.4}$ & $61.3_{\pm1.2}$ & $60.5_{\pm1.6}$ & $63.9_{\pm1.0}$ & $59.6_{\pm1.5}$ & - \\
        & \JointSLBO{} & - & - & - & - & - & - & $\cellcolor{darkgreen!50}\mathbf{\underline{69.5}}_{\pm0.8}$ \\
      \bottomrule
    \end{tabular}%
  }
  \label{table:mix-and-match-additional-MMLU}
\end{table*}

\begin{table*}[h!]
  \centering
  \caption{ARC \cite{ai2_arc}.}
  \resizebox{\textwidth}{!}{%
    \begin{tabular}{llcccccccc}
      \toprule
      \multicolumn{2}{c}{\bfseries $\downarrow$ Model | Data $\rightarrow$}
        & Default & LESS & DoReMi & IF & Diversity  & BO & \JointSLBO{} \\
      \midrule
      \multirow{4}{*}{}
        & Default  & $54.7_{\pm1.3}$ & $59.2_{\pm0.7}$ & $61.4_{\pm2.0}$ & $52.8_{\pm1.5}$ & $60.6_{\pm0.9}$ & $62.3_{\pm1.2}$ & - \\
        & DARTS        & $58.1_{\pm0.8}$ & $61.0_{\pm1.6}$ & $62.8_{\pm0.5}$ & $54.3_{\pm2.1}$ & $57.9_{\pm1.7}$ & $60.5_{\pm0.6}$ & - \\
        & AutoLoRA     & $60.4_{\pm1.1}$ & $63.2_{\pm0.9}$ & $58.6_{\pm1.9}$ & $55.1_{\pm0.8}$ & $62.1_{\pm2.0}$ & $59.8_{\pm1.0}$ & - \\
        & RoBoT        & $56.8_{\pm1.5}$ & $58.7_{\pm1.4}$ & $61.1_{\pm1.2}$ & $60.3_{\pm0.7}$ & $55.7_{\pm2.2}$ & $61.4_{\pm1.3}$ & - \\
        & BO           & $52.6_{\pm2.0}$ & $55.9_{\pm0.6}$ & $59.7_{\pm1.3}$ & $58.5_{\pm1.4}$ & $63.4_{\pm1.0}$ & $57.4_{\pm0.9}$ & - \\
        & \JointSLBO{} & - & - & - & - & - & - & $\cellcolor{darkgreen!50}\mathbf{\underline{70.4}}_{\pm1.3}$ \\
      \bottomrule
    \end{tabular}%
  }
  \label{table:mix-and-match-additional-ARC}
\end{table*}

\begin{table*}[h!]
  \centering
  \caption{TriviaQA Gen \cite{triviaQA}.}
  \resizebox{\textwidth}{!}{%
    \begin{tabular}{llcccccccc}
      \toprule
      \multicolumn{2}{c}{\bfseries $\downarrow$ Model | Data $\rightarrow$}
        & Default & LESS & DoReMi & IF & Diversity  & BO & \JointSLBO{} \\
      \midrule
      \multirow{4}{*}{}
        & Default  & $55.5_{\pm 1.4}$ & $57.2_{\pm0.8}$ & $53.1_{\pm0.9}$ & $55.8_{\pm 0.7}$ & $58.9_{\pm 0.8}$ & $65.0_{\pm0.6}$ & - \\
        & DARTS        & $58.2_{\pm0.8}$ & $61.3_{\pm1.2}$ & $61.0_{\pm0.7}$ & $63.3_{\pm1.0}$ & $59.2_{\pm0.6}$ & $66.7_{\pm1.8}$ & - \\
        & AutoLoRA     & $67.8_{\pm1.4}$ & $64.7_{\pm0.9}$ & $70.6_{\pm2.2}$ & $68.6_{\pm1.7}$ & $66.2_{\pm1.5}$ & $69.7_{\pm2.4}$ & - \\
        & RoBoT        & $58.4_{\pm1.5}$ & $62.3_{\pm1.7}$ & $64.2_{\pm1.4}$ & $57.2_{\pm1.2}$ & $63.4_{\pm1.5}$ & $68.2_{\pm1.3}$ & - \\
        & BO           & $70.7_{\pm1.4}$ & $66.7_{\pm0.8}$ & $72.5_{\pm0.8}$ & $71.7_{\pm0.9}$ & $74.7_{\pm1.0}$ & $72.7_{\pm2.3}$ & - \\
        & \JointSLBO{} & - & - & - & - & - & - & $\cellcolor{darkgreen!50}\mathbf{\underline{76.2}}{\pm1.9}$ \\
      \bottomrule
    \end{tabular}%
  }
  \label{table:mix-and-match-additional-TriviaQA}
\end{table*}

\subsection{Additional Baselines and Extension to Larger Model and Different Evaluation Metrics}\label{app:other-baselines-naive-extension}

\textbf{Additional baselines}. In Table~\ref{table-sanity}, we jointly optimized LLM training configurations using several other naive approaches in our experiments. We tried $3$ naive approaches: (a) \textbf{Random} randomly picking $500$ different LLM training configurations, fine-tune them for $100$ training steps each, use our performance scaling law predictor to predict and select the best-performing LLM training configuration. (b) \textbf{Random Data} perform \JointSLBO{} on model configurations for only $10$ iterations and repeat the experiment with $10$ randomly chosen data configurations (this ensures the same amount of compute as performing \JointSLBO{} on all LLM training configurations for $100$ iterations). (c) \textbf{Random Model} repeat approach (b) on LLM training configurations instead. While these approaches serve as good sanity checks, they do not yield good LLM performances, largely because randomly selecting LLM training configurations does not exploit the learnt performance landscape from historically observed LLM performances.

\textbf{Extension to different model sizes, full-parameter fine-tuning and different performance metrics}. We extended our experiments to larger model sizes ($\texttt{Qwen3-14B}$ (Fig.~\ref{fig:results-qwen-performance}) and also to full-parameter fine-tuning (Table~\ref{table:rebuttal-performance-full-finetuning-larger-model}). In particular, when performing full-parameter fine-tuning, we do not use LoRA, but instead optimize which LLM layer to fine-tune.

\begin{table*}[h]
  \centering
  \caption{Comparison of some naive baselines with \JointSLBO{} (Higher is better), averaged over $5$ trials. \textbf{Random Data} means that we randomly selected data mixtures and applied \JointSLBO{} only on the model configurations (and vice versa for \textbf{Random Model}).}
  \vspace{20mm}
  \resizebox{0.75\textwidth}{!}{%
    \begin{tabular}{llccc}
      \toprule
      \multicolumn{2}{l}{Model \hspace{31mm} Task}
        & \textbf{Random Data}  & \textbf{Random Model} & \JointSLBO{} \\
      \midrule
      \multirow{7}{*}{Llama-3-8B-Instruct}
        & GSM$8$K  & $67.3_{\pm1.6}$ &  $71.5_{\pm0.9}$ & \cellcolor{darkgreen!50}$86.4_{\pm{1.2}}$ \\
        & TruthfulQA        &  $59.8_{\pm1.5}$ & $64.2_{\pm1.4}$  & \cellcolor{darkgreen!50}$80.6_{\pm{1.1}}$   \\
        & CommonsenseQA        &  $76.4_{\pm1.2}$ &  $76.3_{\pm1.2}$  & \cellcolor{darkgreen!50}$84.3_{\pm{2.4}}$ \\
        & MMLU   & $66.4_{\pm0.7}$ & $63.1_{\pm1.1}$ & \cellcolor{darkgreen!50}$69.5_{\pm{0.8}}$        \\
        & ARC   & $65.2_{\pm1.7}$ & $64.6_{\pm0.6}$ & \cellcolor{darkgreen!50}$70.4_{\pm{1.3}}$        \\
        & TriviaQA   & $61.7_{\pm2.4}$ & $63.2_{\pm1.5}$ & \cellcolor{darkgreen!50}$76.2_{\pm{1.2}}$        \\
      \bottomrule
    \end{tabular}%
  }
  \label{table-sanity}
\end{table*}

\begin{figure}[h]
\centering
\includegraphics[width=\linewidth]{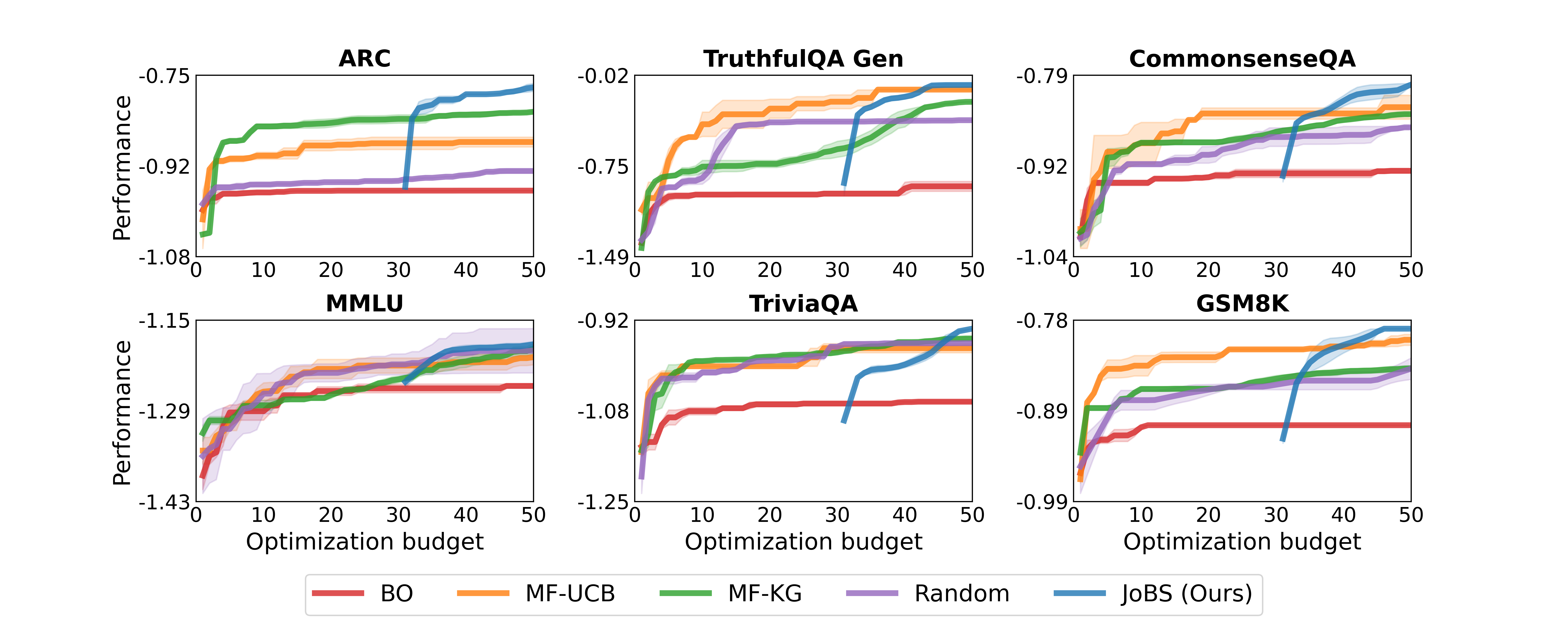}
\caption{\JointSLBO{} performance for LLM evaluation loss (instead of performance).}
\label{fig:results-llama-loss}
\end{figure}

\begin{figure}[h]
\centering
\includegraphics[width=\linewidth]{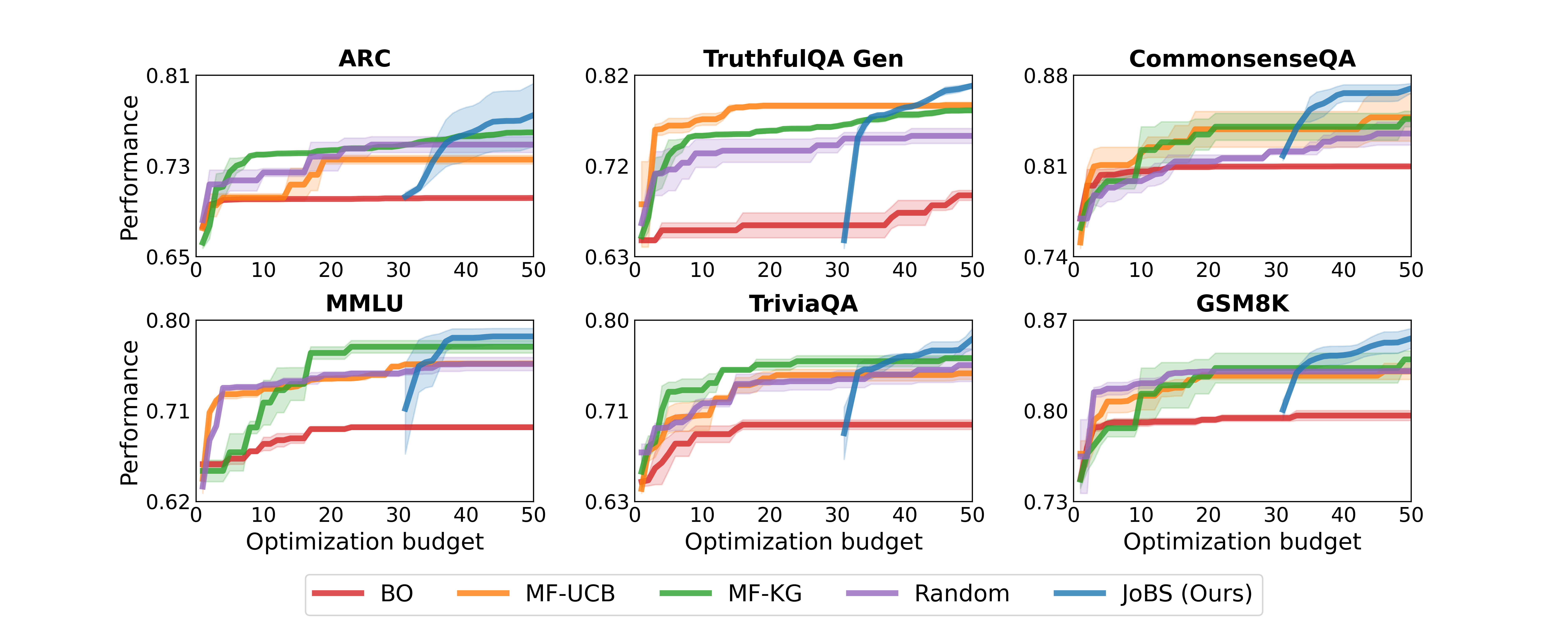}
\caption{\JointSLBO{} performance for LLM evaluation performance with \texttt{Qwen-14B}.}
\label{fig:results-qwen-performance}
\end{figure}

\begin{table}[!htbp]
\caption{\JointSLBO{} applied to full-parameter fine-tuning of larger models.}
\label{table:rebuttal-performance-full-finetuning-larger-model}
\centering
\begin{tabular}{|c|c|c|c|}
\hline
$\downarrow$ Model | Method $\rightarrow$ & LESS + AutoLoRA & DoReMi + DARTS & \JointSLBO{} \\ \hline
Llama-3-8B-Instruct & 0.73 & 0.76 & \textbf{0.81}  \\ 
Qwen3-14B & 0.76 & 0.81 & \textbf{0.83} \\ 
Qwen3-32B & 0.86 & 0.82 & \textbf{0.88} \\ \hline
\end{tabular}
\end{table}


\end{document}